\documentclass[10pt,journal,compsoc]{IEEEtran}

\usepackage[linesnumbered,ruled,vlined]{algorithm2e}

\usepackage[table]{xcolor}
\usepackage{amsmath,amsfonts}
\usepackage{array}
\usepackage[caption=false,font=normalsize,labelfont=sf,textfont=sf]{subfig}
\usepackage{textcomp}
\usepackage{stfloats}
\usepackage{url}
\usepackage{verbatim}
\usepackage{graphicx}
\usepackage{cite}
\usepackage{booktabs}
\usepackage{multirow}
\usepackage{ragged2e}
\usepackage{amsthm} 
\usepackage{makecell}

\usepackage{ulem}
\usepackage{xcolor} 

\newtheorem{definition}{Definition}[section]

\theoremstyle{plain}
\newtheorem{theorem}{Theorem}[section]

\theoremstyle{definition}

\theoremstyle{remark}

\definecolor{lightgray}{gray}{0.9}

\usepackage{bm}
\DeclareMathAlphabet{\mathsfit}{\encodingdefault}{\sfdefault}{m}{sl}
\SetMathAlphabet{\mathsfit}{bold}{\encodingdefault}{\sfdefault}{bx}{n}

\begin{document}

\title{\textit{Co$^2$In}: a Bi-level Memory Incremental Learning Framework with Knowledge Encoding, Consolidation, and Integration}


\author{Wenju~Sun, 
Qingyong~Li,
Boyang~Li,
Wen~Wang,
Siyu~Zhang,
Yi~Wei,
Yangli-ao~Geng

\IEEEcompsocitemizethanks{

\IEEEcompsocthanksitem Wenju~Sun, Qingyong~Li, Wen~Wang, Siyu~Zhang, and Yangli-ao~Geng are with Key Laboratory of Big Data \& Artificial Intelligence in Transportation (Ministry of Education), Beijing Jiaotong University, Beijing, China, 100044. Emails: \{SunWenJu, liqy, wangwen, zhangsiyu, gengyla\}@bjtu.edu.cn\protect\\
\IEEEcompsocthanksitem Boyang Li is with College of Computing and Data Science, Nanyang Technological University, Singapore, 639798. Email: boyang.li@ntu.edu.sg\protect\\

\IEEEcompsocthanksitem Yi Wei is with Fujian Key Laboratory of Big Data Application and Intellectualization for Tea Industry, Wuyi University, Fujian, China, 354300. Email: wyqbits@sina.com\protect\\
\IEEEcompsocthanksitem Corresponding author: Yangli-ao Geng.}
}

\markboth{Journal of IEEE Transactions On Pattern Analysis and Machine Intelligence}
{Shell \MakeLowercase{\textit{et al.}}: A Sample Article Using IEEEtran.cls for IEEE Journals}


\IEEEoverridecommandlockouts
\IEEEpubid{\begin{minipage}{\textwidth}\ \\[12pt]\ \copyright 2026 IEEE.  Personal use of this material is permitted.  Permission from IEEE must be obtained for all other uses, in any current or future media, including reprinting/republishing this material for advertising or promotional purposes, creating new collective works, for resale or redistribution to servers or lists, or reuse of any copyrighted component of this work in other works.\end{minipage}}

\IEEEtitleabstractindextext{%
\begin{abstract} \justifying 
Incremental learning (IL) aims to continually acquire new knowledge (\textbf{plasticity}) while retaining previously learned information (\textbf{stability}). However, striking a balance between plasticity and stability remains a significant challenge for intelligent systems. The human brain achieves exceptional balance, owing to various memory units that collaboratively encode and store information. Inspired by human memory mechanisms, this paper introduces an IL framework with knowledge en\textbf{Co}ding, \textbf{Co}nsolidation, and \textbf{In}tegration (\textit{Co$^2$In}). \textit{Co$^2$In} is designed as a bi-level memory architecture: a working memory for adaptive knowledge acquisition and a long-term memory dedicated to the persistent retention of information. The two memory modules work cooperatively during IL. The working memory first learns from new data to encode knowledge into parameters. Subsequently, \textit{Co$^2$In} performs a consolidation process to identify underlying patterns in the learned parameters and re-express them into a compact knowledge representation. Next, the knowledge representation and the identified patterns are transformed into separate network layers and integrated into the long-term memory. These designs empower \textit{Co$^2$In} to accumulate knowledge with high plasticity and stability. We evaluate \textit{Co$^2$In} on CIFAR-10, CIFAR-100, and Tiny-ImageNet under exemplar-free Class-IL and Task-IL settings. Experimental results show that \textit{Co$^2$In} achieves state-of-the-art performance with efficient memory consumption.

\end{abstract}

\begin{IEEEkeywords}
Incremental Learning, Continual Learning, Catastrophic Forgetting, Class-Incremental Learning.
\end{IEEEkeywords}
}

\maketitle

\section{Introduction}

In an ever-changing environment, incremental learning (IL) emerges as a critical challenge for intelligent systems to continually accumulate new knowledge while retaining previously acquired information \cite{survey3}. In typical IL scenarios, a learner is required to learn a sequence of disjoint tasks. When learning a new task, only data from the new task are available. In this training schema, an effective IL learner should be able to flexibly learn and incorporate new information from the current task (\textbf{plasticity}) while retaining previously acquired old-task knowledge (\textbf{stability}).
To strike a balance between plasticity and stability, a variety of IL strategies have been developed based on distinct technical mechanisms\cite{survey1, survey2}. Among them, memory-based IL methods have shown superior performance \cite{der, gpm, adam-nscl, x-der}.

\begin{figure*}[!t]
    \centering
    \includegraphics[width=0.85\textwidth]{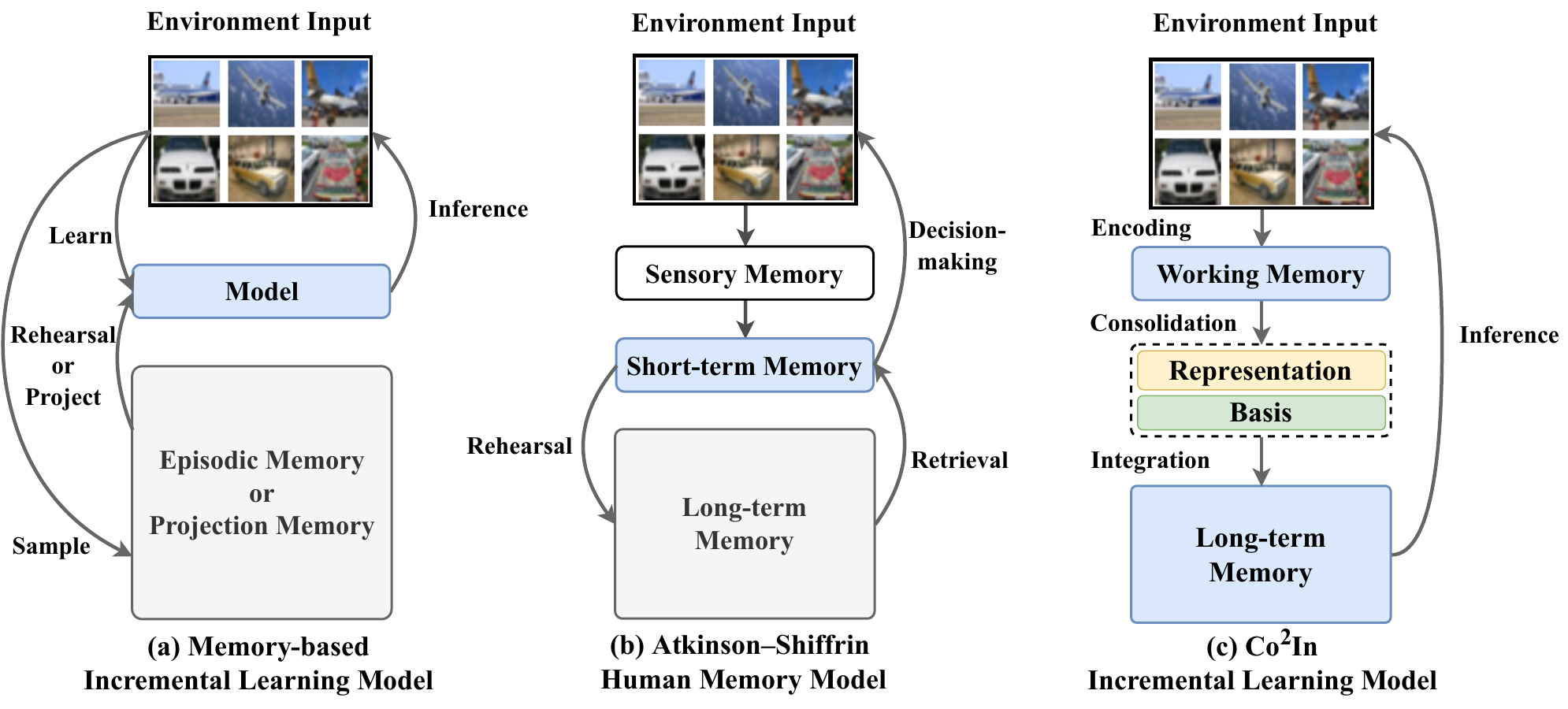}
    \caption{
        Diagram of memory-based IL methods, Atkinson-Shiffrin human memory model \cite{human_memory}, and the proposed \textit{Co$^2$In}. (a) Memory-based IL methods prevent the model from forgetting when learning by replaying old-task exemplars or restricting gradient directions. (b) In the human brain, information is temporally stored in short-term memory. After repeated rehearsal, the information will be permanently remembered in long-term memory. (c) Inspired by the human brain, \textit{Co$^2$In} acquires new knowledge with working memory and integrates it into long-term memory for storage.
    }
    \label{fig:architecture}
\end{figure*}

As illustrated in Fig.~\ref{fig:architecture} (a), memory-based methods typically involve an auxiliary memory to store partial information from previous tasks. The information is used to relieve forgetting when learning new tasks. Depending on the type of information stored, memory-based methods can be further categorized into rehearsal-based and gradient-memory-based methods. Rehearsal-based methods \cite{der, x-der, lider, gem, a-gem, bic, icarl, abd} adopt an exemplar memory (or a generative model) to store (or generate) samples or features from old tasks, replaying them to reinforce old-task knowledge during new-task learning. Alternatively, gradient-memory-based methods \cite{gpm, adam-nscl, sd, adns, rgo} choose to monitor gradient directions that might impede old-task performance, updating the model only with gradients orthogonal to the stored ones. 

Both rehearsal-based and gradient-memory-based methods adhere to a fundamental principle: the model must simultaneously acquire new knowledge while retaining old knowledge. However, knowledge acquiring and remembering typically interfere with each other, leading to the stability-plasticity dilemma \cite{dilemma}. For the rehearsal-based methods, fitting new tasks could overwrite old knowledge, particularly if the stored samples inadequately represent the data distribution of the old tasks, incurring compromised stability. On the other hand, gradient memory-based methods may hinder parameter updates along the direction of the steepest descent for loss minimization, especially in extreme cases where the entire gradient space is constrained, resulting in reduced plasticity.

In contrast, humans exhibit remarkable IL abilities, attributed to the multi-level structure of the human brain and effective memory procedures. Figure~\ref{fig:architecture} (b) illustrates the classic Atkinson-Shiffrin human memory model \cite{human_memory}. In this schema, environmental information is first captured by sensory memory and temporarily stored in short-term memory. Through repeated rehearsals, the encoded information undergoes consolidation with the assistance of the prefrontal cortex, eventually leading to its permanent storage in the long-term memory \cite{memory_consolidation, memory_construct}. The layered structure effectively segregates the tasks of information encoding and storage into different brain organs, endowing human memory with an ideal balance of plasticity and stability. 

Inspired by the mechanism of human memory, we introduce a bi-level memory framework for IL. As illustrated in Fig.~\ref{fig:architecture}(c), the framework consists of a working memory (akin to the short-term memory in the human memory model) and a long-term memory, each implemented as neural networks but with distinct roles. The working memory dynamically acquires new knowledge, while the long-term memory consistently stores all learned information. This cognitively inspired framework decouples learning from remembering, ensuring sufficient plasticity and stability. 


Based on the bi-level memory framework, we propose the knowledge en\textbf{Co}ding, \textbf{Co}nsolidation, and \textbf{In}tegration (\textbf{\textit{Co$^2$In}}) strategy to continually accumulate knowledge. Upon encountering a new task, the working memory first encodes the new knowledge into trainable parameters. Acknowledging the inherent over-parameterization issue within these trained parameters \cite{compression_survey}, \textit{Co$^2$In} executes a consolidation process to identify the underlying patterns of the trained parameters. The identified patterns serve as a set of basis to construct a parameter space that encapsulates valuable knowledge but with significantly lower dimensions. By projecting the trained parameters with respect to the pattern basis, \textit{Co$^2$In} generates a compact representation of valuable knowledge involved in the trained parameters. Then, the long-term memory re-organizes the knowledge representations and pattern basis of all learned tasks into independent layers, ultimately integrating them into a unified network. The unified network directly executes inference operations. Moreover, we equip \textit{Co$^2$In} with a novel quadratic classifier, which establishes compact decision boundaries. As a result, \textit{Co$^2$In} contains cross-task discriminability without necessitating the task identification of the query sample, thus enhancing its generalizability in task-agnostic scenarios \cite{setting}.

The contributions of this paper are listed as follows:
\begin{itemize}
  \item Inspired by the human memory system, we propose \textit{Co$^2$In}, a bi-level memory IL method that decouples knowledge acquisition and retention to balance plasticity and stability. A series of strategies are devised for knowledge encoding, consolidation, and integration within \textit{Co$^2$In}, enabling effective IL with high memory and computational efficiency.

  \item We propose a quadratic classifier to encourage \textit{Co$^2$In} to learn compact decision boundaries, which further extends its capability for task-agnostic scenarios, so \textit{Co$^2$In} is applicable to both Task-IL and Class-IL.
  
  \item \textit{Co$^2$In} is evaluated on CIFAR-10, CIFAR-100, and Tiny-ImageNet datasets. Our experimental results demonstrate that \textit{Co$^2$In} outperforms most state-of-the-art baselines with lower memory usage. 
\end{itemize}

A preliminary conference version of this work, referred to as BMKP, has been presented in \cite{bmkp}. This paper expands upon the conference version with several key enhancements: (1) We introduce a more efficient knowledge integration process, which markedly reduces computational overhead during inference. (2) A quadratic classifier is incorporated into the bi-level memory framework, notably improving the performance of \textit{Co$^2$In} in exemplar-free Class-IL, an aspect not covered in BMKP. (3) We offer more experiments to comprehensively validate \textit{Co$^2$In}.


\section{Related Work} \label{sec:related_work}

\subsection{Incremental Learning}

According to the techniques for balancing stability and plasticity, current IL methods can be classified into three categories: memory-based, regularization-based, and architecture-based. 

\textbf{Memory-based methods} typically preserve partial previous tasks information to mitigate catastrophic forgetting. According to the stored information, memory-based methods can be further split into rehearsal-based methods and gradient-memory based methods. Some rehearsal-based methods \cite{der, x-der, icarl, hcr, co2l, clser, lider, onpro} store old-task samples as exemplars, and replay them alongside new data during learning. Other rehearsal-based methods \cite{dgr, abd, di, PGMA, dcpoc} maintain a generative model capable of producing pseudo-samples that mimic the distribution of old tasks, also known as generative replay. On the other hand, gradient-memory based methods \cite{gem, a-gem, gpm, adam, sd, rgo} adopt a different strategy by preserving the directions of gradients that could disrupt the performance on earlier tasks. These recorded gradients serve as constraints during model updates: only those gradient directions orthogonal to them are permitted for updating the parameters, thereby preserving previously acquired knowledge. Although \textit{Co$^2$In} employs a bi-level memory structure, it avoids exemplar storage and gradient constraints by instead retaining low-rank knowledge patterns and representations.

\textbf{Regularization-based methods} alleviate forgetting by incorporating regularization terms into the loss function. Learning without Forgetting (LwF) \cite{lwf} first applies knowledge distillation to regularize the model, aiming to align the current model with a teacher model representing the old-task knowledge. As knowledge distillation is effective in reducing forgetting, this technique has become a standard IL baseline and has been widely adopted in many IL methods \cite{lwf, podnet, tpcil, vdfd, praka}. Another prominent regularization-based method is Elastic Weight Consolidation (EWC) \cite{ewc}. EWC leverages the Fisher information matrix to estimate the importance of each parameter and punishes their changes during subsequent task training. Several IL methods follow a similar idea to alleviate forgetting, such as Synaptic Intelligence (SI) \cite{si}, oEWC \cite{oewc}, and Memory Aware Synapses (MAS) \cite{mas}. In this paper, we also propose a regularization term for \textit{Co$^2$In} to encourage the model to reuse previously acquired knowledge patterns in new tasks.

\textbf{Architecture-based methods} address forgetting by employing specifically designed network structures. Some mask-based methods \cite{packnet, cpg, piggyback, spg} learn a set of parameter masks that indicate the assignment of neurons. These masks facilitate the selective activation or deactivation of specific parts of the network for each task. Other expansion-based methods \cite{pnn, dcpoc, ilcoc, den, iru} expand the width of networks to ensure sufficient network capacity for new-task learning. During inference, most architecture-based methods require explicit task identification of queries to adapt the parameters accordingly, which limits their application in task-agnostic scenarios, e.g., Class-IL. To address this issue, some architecture-based methods leverage one-class classifiers \cite{ilcoc, pcl, dcpoc} or confidence \cite{eft} to achieve task-wise discrimination. Our \textit{Co$^2$In} increases the network capacity by dynamically expanding pattern basis. Moreover, similar to one-class classifier based methods, it learns compact decision boundaries using a quadratic classifier, enabling task-agnostic inference without explicit task identifiers.

Additionally, various IL methods are inspired by the human cognitive process. Drawing inspiration from the phenomenon of synaptic expansion (neurons expand new synapses to acquire new information), Wang et al. \cite{afec} propose to expand new network branches when learning a new task. The parameters of the newly expanded branch are selectively merged with the original branch through knowledge distillation. Sarfraz et al. \cite{scommer} suggest a multi-memory system to mimic the human memory structure, including episodic memory, working memory, and long-term memory. \textit{Co$^2$In} shares a similar motivation with these approaches but with different underlying mechanisms. In \textit{Co$^2$In}, all knowledge is stored in the long-term memory, while the working memory is solely responsible for acquiring knowledge. With this design, \textit{Co$^2$In} decouples the roles of learning and remembering into different memory units, thereby achieving both high plasticity and stability. In contrast, the responsibilities of knowledge acquiring and remembering in existing cognitive-inspired IL methods are still concentrated within the same model.

Some recent IL methods are designed for pretrained models, which benefit from a well-initialized, transferable representation space. Prompt-based methods insert learnable prompts into frozen backbones, where L2P \cite{pre_1} dynamically selects input-dependent prompts, DualPrompt \cite{pre_2} disentangles task-shared and task-specific prompts across layers, and CODA-Prompt \cite{pre_3} further enables compositional prompt generation. Adapter-based methods introduce lightweight modules for task adaptation, including EASE \cite{pre_5}, which learns task-specific adapters via prototype completion to mitigate feature–classifier mismatch; SSIAT \cite{pre_6}, which maintains a shared adapter and performs unified classifier retraining; and ACMAP \cite{acmap}, which improves scalability by merging multiple adapters. Other LoRA-based approaches constrain updates to low-rank subspaces, with InfoLoRA \cite{infolora} constructing interference-aware subspaces and CL-LoRA \cite{pre_9} combining shared and task-specific LoRA modules. In this paper, the proposed \textit{Co$^2$In} focuses on the training-from-scratch setting, while it can be naturally extended to pretrained-based IL by learning a low-rank residual branch, enabling seamless integration with parameter-efficient adaptation paradigms.

\subsection{Human Memory Mechanism}

Human beings exhibit remarkable plasticity and stability throughout their lives, owing to the brain's multi-layered memory structure and effective memory mechanisms. The classical Atkinson-Shiffrin model \cite{human_memory} posits that human memory includes three main units: sensory memory, short-term memory, and long-term memory. Sensory memory acts as a cache for sensory activations like sounds or touches, transferring information to short-term memory when attention is concentrated. Short-term memory holds information for immediate use, and after repeated rehearsal, this information is consolidated into long-term memory for permanent storage. This hierarchical structure ensures both the plasticity and stability of the brain.

This paper draws inspiration from the human brain's memory mechanisms. Leveraging the hierarchical structure of multi-level human memory \cite{human_memory}, we propose a bi-level memory IL framework that separates knowledge acquisition and retention into specialized memory units, ensuring both plasticity and stability. guided by the cognitive mechanisms of the human memory process  \cite{memory_construct}, we propose the knowledge encoding, consolidation, and integration strategy within the bi-level memory framework. This strategy ensures both memory and computational efficiency.

\begin{table*}[htbp]
\centering
\caption{Mathematical notations and their descriptions.}
\label{tab:symbols}
\renewcommand{\arraystretch}{0.6} 
\begin{tabular}{llp{10cm}}
\toprule
\textbf{Notation} & \textbf{Description} \\
\midrule

\multicolumn{2}{c}{\textit{\textbf{General Notations}}} \\
$\left\| . \right\|_1$ & $\ell_1$ norm: sum of absolute values of all elements. \\
$\left\| . \right\|_F$ & Frobenius norm: $\sqrt{\sum_{i,j} |a_{ij}|^2}$ for a matrix $\bm{A}$. \\
$\text{SVD}(.)$ & Singular Value Decomposition: $\text{SVD}(\bm{X}) = \bm{U}\bm{\Sigma}\bm{V}^\top$, where $\bm{U}$ and $\bm{V}$ are orthogonal, and $\bm{\Sigma}$ is diagonal. \\
$L_{ce}(.)$ & The cross-entropy loss. \\
$[.,.]$ & The concatenation operation. \\

\midrule
\multicolumn{2}{c}{\textit{\textbf{Dataset and Layer-wise Features Transformation Notations}}} \\
$\mathcal{D}_t$ & Dataset of task $t$: $\mathcal{D}_t = \{(\bm{x}_k, y_k)\}_{k=1}^{n_t}$, where $n_t$ is the sample size. \\
$\bm{X}^l \in \mathbb{R}^{p^l \times n}$ & Input features of the $l$-th layer: $\bm{X}^l = [\bm{x}_1^l, \dots, \bm{x}_k^l, \dots, \bm{x}_{n}^l]$, where $\bm{x}_k^l \in \mathbb{R}^{p^l}$ corresponds the feature for the $k$-th sample. \\
$\bm{Z}^l \in \mathbb{R}^{p^{l+1} \times n}$ & Transformed features of the $l$-th layer: $\bm{Z}^l = \bm{W}^l \bm{X}^l$. \\
$\bm{W}^l \in \mathbb{R}^{p^{l+1} \times p^l}$ & Parameters of the $l$-th layer. \\
$\sigma$ & The parameter-free non-linear unit such as the ReLU activation: $\bm{X}^{l+1} = \sigma(\bm{Z}^l)$. \\

\midrule
\multicolumn{2}{c}{\textit{\textbf{\textit{Co$^2$In} Specific Notations: Knowledge Representation and Pattern Basis}}} \\
$\mathcal{A}$ & \makecell[l]{Knowledge representations of all tasks: $\mathcal{A} = \{\mathcal{A}_1, \mathcal{A}_2, \dots\}$, where $\mathcal{A}_t = \{\bm{A}_t^1, \bm{A}_t^2, ..., \bm{A}_t^L\}$ is the knowledge\\ representation specific to task $t$}. \\
$\bm{A}_t^l \in \mathbb{R}^{r_t^l \times p^l}$ & Knowledge representation of the $l$-th layer for task $t$. \\
$\mathcal{B}_t$ & $\mathcal{B}_t = \{\bm{B}_t^1, ..., \bm{B}_t^L\}$ represents the pattern basis of all layers after task $t$. \\
$\bm{B}_t^l \in \mathbb{R}^{p^{l+1} \times r_t^l}$ & Pattern basis matrix of the $l$-th layer after learning task $t$. \\
$r_t^l$ & Number of retained basis for the $l$-th layer after task $t$. \\
$\hat{\bm{W}}^l \in \mathbb{R}^{p^{l+1} \times p^l}$ & The projection of $\bm{W}^l$ onto the basis of $\bm{B}^l$: $\hat{\bm{W}}^l = \bm{B}^l{\bm{B}^l}^\top \bm{W}^l$. \\
$\tilde{\bm{W}^l}  \in \mathbb{R}^{p^{l+1} \times p^l}$ & The residual of the orthogonal projection of $\bm{W}$ onto CKS: $\tilde{\bm{W}^l} = \bm{W}^l- \bm{B}^l{\bm{B}^l}^\top \bm{W}^l$. \\

\bottomrule
\end{tabular}
\end{table*}

\section{Problem Definition} \label{sec:problem_definition}

Currently, two IL settings are drawing attention from the research community, namely task incremental learning (Task-IL) and class incremental learning (Class-IL) \cite{setting} :

\begin{definition}[Task-IL]
Consider a sequence of $T$ classification tasks $\{ \mathcal{D}_1, \dots, \mathcal{D}_T \}$, each associated with a unique dataset $\mathcal{D}_t = \{(\bm{x}_k, y_k)\}_{k=1}^{n_t}$, where $\bm{x}_k$ and $\bm{y}_k \in \mathcal{Y}_t$ represents a input-label pair. It is critical to note that the classes for classification in different tasks are mutually exclusive, i.e., $\mathcal{Y}_i \cap \mathcal{Y}_j = \emptyset, \forall i \neq j$. The goal of the model is to incrementally learn these tasks, without the need to refer back to the data of previous tasks once they have been learned. During the test stage of Task-IL, each query sample is accompanied by its task identification to assist in classification.
\end{definition}

\begin{definition}[Class-IL]
    Class-IL shares the same setup as Task-IL, except during the testing stage, where task identification is no longer available for each query sample.
\end{definition}

The proposed \textit{Co$^2$In} is feasible for both Task-IL and Class-IL. Moreover, considering the application in privacy-sensitive scenarios, this paper focuses on the two settings under the challenging \textbf{exemplar-free} restriction \cite{classilsurvey}, where no samples from previous tasks are retained for replaying.

To achieve this, we investigate layer-wise operations of neural networks, where features are progressively transformed across layers. By decomposing and preserving parameters layer-by-layer, the model achieves efficient IL while mitigating forgetting. Relevant notations for these operations are summarized in Table~\ref{tab:symbols}, organized into the following categories:

\begin{itemize}
    \item General notations: commonly used mathematical operations and norms, such as $\ell_1$ norm ($\left\| . \right\|_1$), Frobenius norm ($\left\| . \right\|_F$), singular value decomposition (SVD), and cross-entropy loss ($L_{ce}(.)$).
    \item Dataset and layer-wise feature transformations, including input features ($\bm{X}^l$), transformed features ($\bm{Z}^l$), parameters ($\bm{W}^l$), and non-linear activation ($\sigma$).
    \item Notations specific for \textit{Co$^2$In}: knowledge representations across tasks ($\mathcal{A}$), pattern basis ($\mathcal{B}_t$), and projection-related terms such as $\hat{\bm{W}}^l$ (projection of weights) and $\tilde{\bm{W}}^l$ (residual projection).
\end{itemize}

\begin{figure*}[!t]
    \centering
    \includegraphics[width=1.0\textwidth]{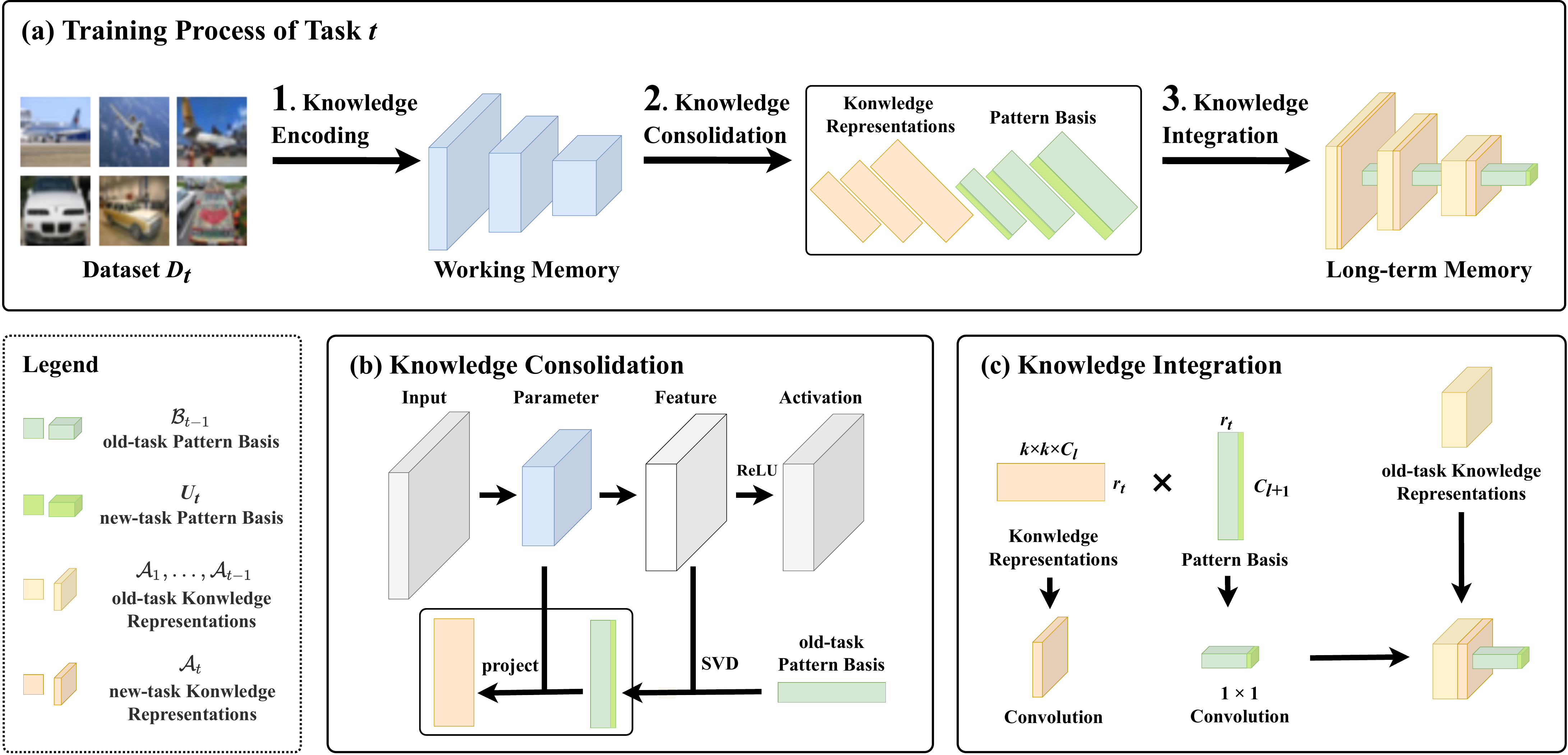}
    \caption{
        Illustration of the learning process of \textit{Co$^2$In}.
        (a) The task-$t$ training process of \textit{Co$^2$In}, including knowledge encoding, knowledge consolidation, and knowledge integration steps.
        (b) In the knowledge consolidation step, \textit{Co$^2$In} recognizes new pattern basis and projects the parameters to form compact knowledge representations. (c) In the knowledge integration step, \textit{Co$^2$In} reorganizes the long-term memory into a unified neural network.
    }
    \label{fig:process}
\end{figure*}

\section{Bi-level Memory Framework}\label{sec:framework}

A naive IL solution is to train an independent model for each task, but storing and running all task-specific models lead to increasing memory and computational overhead. Our proposed approach involves using low-rank matrices to compactly represent incremental knowledge learned from each additional task and gradually expand the network width. This forms the bi-level structure introduced in this section, a novel IL framework that separates knowledge acquisition and retention into two distinct memory units. Before describing the bi-level structure, we first elaborate on two knowledge spaces that serve as a guide for organizing knowledge within these memory units.

\subsection{Knowledge Spaces}
\label{ssec:know_spac}

In the context of neural networks, we define \textbf{knowledge} as the inherent capability to map a specific input into the corresponding output. The network parameters can be thought of as an explicit representation of knowledge, and we define the space which resides in as \textbf{Parameter Knowledge Space (PKS)}. PKS often exhibits redundancy in dimensions, as current deep neural models are typically overparameterized \cite{compression_survey}. In light of this, we introduce \textbf{Core Knowledge Space (CKS)} as a subspace of PKS that contains all meaningful information but with much fewer dimensions.

We pursue a CKS by investigating the mechanism of neural networks. Consider an $L$-layered neural network with parameters $\mathcal{W}=\{\bm{W}^1,...,\bm{W}^L\}$, where $\bm{W}^l$ is a $p^{l+1}$-by-$p^{l}$ matrix representing the parameters of the $l$-th layer. The network receives input features and executes the following feature transformation layer by layer:
\begin{equation}
  \bm{Z}^l = \bm{W}^l\bm{X}^l \ \ \ \ \bm{X}^{l+1} = \sigma(\bm{Z}^l),
  \label{equation:layer_func} 
\end{equation}
where $\bm{X}^l$ is the input of layer $l$ and $\bm{X}^{l+1}$ is the output of layer $l$.
It is worth emphasizing that, without loss of generality, we conceptualize the parameters $\bm{W}^l$ as a matrix, given the inherent linearity within current network layers. Section~\ref{sec:cnn} details the extension to convolutional layers.

Equation~\eqref{equation:layer_func} evidence that the feature $\bm{Z}^l$ is a linear combination of the columns of $\bm{W}^l$. This observation indicates that only those columns of $\bm{W}^l$ that contribute to the combination of $\bm{Z}^l$ contain valuable knowledge, i.e., parts of $\bm{W}^l$ that reside within the column space of $\bm{Z}^l$.
To elucidate further, suppose that the columns of a $p^{l+1}$-by-$r^{l}$ ($r^{l} \ll  p^{l+1}$) matrix $\bm{B}^l$ form an orthogonal basis for the space spanned by $\bm{Z}^l$, the first term in Eq.~\eqref{equation:layer_func} can be re-expressed as: 
\begin{equation*}
  \begin{aligned}
    \bm{Z}^l=\bm{B}^l{\bm{B}^l}^\top \bm{Z}^l & = \bm{B}^l{\bm{B}^l}^\top \bm{W}^l \bm{X}^l \\
     & = \hat{\bm{W}}^l\bm{X}^l\\
     & = \bm{B}^l\bm{A}^l\bm{X}^l, \\
  \end{aligned}
\end{equation*}
which forms a linear space equivalent to the column space of $\bm{Z}^l$: $\hat{\bm{W}}^l$ transforms the given input $\bm{X}^l$ to the expected output $\bm{Z}^l$, implying that $\hat{\bm{W}}^l$ encapsulates the most knowledge in $\bm{W}^l$. $\bm{A}^l = {\bm{B}^l}^\top \bm{W}^l$ stands for the coefficient matrix of $\hat{\bm{W}}^l$ under the basis of $\bm{B}^l$, which can be interpreted as a compact knowledge representation of $\bm{W}^l$ with respect to the basis $\bm{B}^l$. 
We conceptualize the space spanned by $\bm{Z}^l$ as CKS and refer $\bm{B}^l$ as pattern basis. One possible way to find such a $\bm{B}^l$ is to perform singular value decomposition (SVD) of $\text{SVD}(\bm{Z}^l)=\bm{U}^l\bm{\Sigma}^l{\bm{V}^l}^\top$ and let $\bm{B}^l = \bm{U}^l$.

\subsection{Bi-level Memory Structure}  \label{sec:bilevel}

We propose a bi-level memory framework based on PKS and CKS. As depicted in Fig.~\ref{fig:architecture} (c), The framework consists of the following two memory units:

\begin{itemize}
\item \textbf{Working memory} is responsible for adaptively acquiring new knowledge in PKS. It is implemented as a $L$-layer neural network, with parameters denoted as $\mathcal{W}=\{\bm{W}^1,...,\bm{W}^L\}$.

\item \textbf{Long-term memory} dynamically maintains an entire CKS during incremental learning, including a set of pattern basis $\mathcal{B} = \{\bm{B}^1,...,\bm{B}^L\}$ for each layer, alongside the projected knowledge representations for all learned tasks $\mathcal{A} = \{\mathcal{A}_1, ..., \mathcal{A}_T\}$.
\end{itemize}

The bi-level memory framework decouples the roles of knowledge acquiring and remembering into specialized memory units, thereby ensuring high plasticity and stability.

\section{Knowledge Encoding, Consolidation, and Integration}\label{sec:co2in}

Based on the bi-level memory framework, this section outlines an IL strategy with knowledge encoding, consolidation, and integration (\textit{Co$^2$In}). As illustrated in Fig.~\ref{fig:process}, during the knowledge encoding phase, the working memory fits with new-task data to encode knowledge into parameters. In the knowledge consolidation stage, \textit{Co$^2$In} projects the parameter-formed knowledge onto CKS to constitute a much more compact knowledge representation. Subsequently, in the knowledge integration step, the long-term memory re-organizes all knowledge representations and pattern basis into a unified neural network. 

We will elaborate on knowledge encoding, consolidation, and integration steps in Section~\ref{sec:knowledge_encoding}, \ref{sec:knowledge_consolidation}, and \ref{sec:knowledge_integration}, respectively. Additionally, in Section~\ref{sec:cnn}, we present a straightforward strategy for extending our methods to convolutional neural networks. Lastly, in Section~\ref{sec:quadratic_classifier}, we propose a quadratic classifier that enhances the Class-IL ability of \textit{Co$^2$In} with compact decision boundaries. 

\subsection{Knowledge Encoding} \label{sec:knowledge_encoding}

Knowledge encoding aims to encapsulate new knowledge from data into the working-memory parameter. 

During training, we incorporate a representation compaction regularizer to ensure that the working memory aligns with the existing pattern basis, formulated as follows:
\begin{equation*}
  L_{reg}(\mathcal{W})=\sum_{\bm{W}^l \in \mathcal{W}} \frac{
  \left\|    \tilde{\bm{W}}^l         \right\|_F^2
  }{
  \left\|   { \tilde{\bm{W}^l} }^\top  \tilde{\bm{W}^l} \right\|_1
  },
  \label{equation:reg} 
\end{equation*}
where $\tilde{\bm{W}^l} = \bm{W}^l- \bm{B}_{t-1}^l\bm{B}_{t-1}^l{}^\top \bm{W}^l$ signifies the residual of the orthogonal projection of $\bm{W}^l$ onto CKS. This regularizer serves two pivotal roles in enhancing performance and efficiency. Firstly, the numerator of Eq.~\eqref{equation:reg} penalizes the Frobenius norm of the projection residual, which encourages knowledge encoding by leveraging previously learned patterns $\bm{B}_{t-1}^l$, thereby facilitating the transfer of old-task knowledge to current task learning. Secondly, the maximization of the $\ell_1$ vector norm in the denominator encourages the columns of the residual to be co-linear:
\begin{equation*}
  \begin{aligned}
  \left\| \tilde{\bm{W}}^l{}^\top \tilde{\bm{W}}^l \right\|_1 = \sum_{i,j}  | \tilde{\bm{w}}^l_i {}^\top  {\tilde{\bm{w}}^l_j}  |,
  \end{aligned}
\end{equation*}
thereby regularizing $\tilde{\bm{W}^l}$ to be approximately low-rank (i.e., the rows approximately lie in a low-dimensional subspace). Empirical evidence in Fig.~\ref{fig:sensitive} has shown that with an appropriate weight, the regularizer significantly reduces memory overhead while improving the performance of \textit{Co$^2$In}.  

The overall loss function for knowledge encoding is
\begin{equation}
  \mathcal{W}_t \longleftarrow \mathop{\arg\min}\limits_{\mathcal{W}} L_{ce}(\mathcal{W}, \mathcal{D}_t) + \lambda L_{reg}(\mathcal{W}),
  \label{equation:loss} 
\end{equation}
where $\lambda$ is a hyper-parameter weighting the regularizer.

\subsection{Knowledge Consolidation} \label{sec:knowledge_consolidation}

\textit{Co$^2$In} next executes knowledge consolidation to project new knowledge from PKS onto CKS.
To minimize the loss of information during projection, we initially recognize new pattern basis based on the column space of $\bm{Z}_t^l$. Instead of directly performing SVD on $\bm{Z}^l_t = \bm{U}^l_{t}\bm{\Sigma}^l_{t} {\bm{V}^l_{t}}^\top$ and letting $\bm{B}_{t}^l=\bm{U}^l_{t}$, we perform SVD on the incremental knowledge learned from task $t$, represented by $\bm{Z}^l_t - {\bm{B}_{t-1}^l}{\bm{B}_{t-1}^l}^\top \bm{Z}^l_t$:
\begin{equation}
  \label{equation:new_pattern} 
  \begin{aligned}
    \bm{U}^l_{t}\bm{\Sigma}^l_{t} {\bm{V}^l_{t}}^\top&\longleftarrow \text{SVD}\left(\bm{Z}^l_t - {\bm{B}_{t-1}^l}{\bm{B}_{t-1}^l}^\top \bm{Z}^l_t\right)\\
    \bm{B}_{t}^l&\longleftarrow [\bm{B}_{t-1}^l, \bm{U}^l_{t}].
  \end{aligned}
\end{equation}

It is worth noting that $\bm{U}^l_{t}$ is guaranteed to be orthogonal to the existing basis $\bm{B}_{t-1}^l$. The orthogonality stems from the operation of projecting $\bm{Z}^l_t$ onto the left null space of $\bm{B}_{t-1}^l$, which is detailed in Theorem~\ref{the:orth}. Moreover, to keep the basis compact, we conduct basis selection with an introduced retained singular-value ratio\footnote{It is defined as the ratio of the sum of retained singular values to the sum of total singular values.}. Only the singular vector corresponding to retained values is picked as a basis.

Based on $\bm{B}_t^l$, \textit{Co$^2$In} then projects $\bm{W}_t^l$ onto CKS :
\begin{equation*}
  \bm{A}_t^l \longleftarrow {\bm{B}_t^l}^\top \bm{W}_t^l.
\end{equation*}
By applying this projection step to all layers, we obtain the updated pattern basis $\mathcal{B}_t=\{\bm{B}_t^1, \dots,\bm{B}_t^L\}$ and the knowledge representation $\mathcal{A}_t=\{\bm{A}_t^1,\dots,\bm{A}_t^L\}$. 

The selection of singular vectors may exclude some valuable patterns associated with minor singular values, potentially resulting in a loss of knowledge. To mitigate this issue, we propose a refinement strategy: finetuning $\mathcal{A}_t$ with $\mathcal{D}_t$ in CKS, which can be formulated as follows:
\begin{equation}
  \mathcal{A}_t \longleftarrow \mathop{\arg\min}\limits_{\mathcal{A}_t} L_{ce}(\mathcal{B}_t\mathcal{A}_t, \mathcal{D}_t),
  \label{equation:rehearsal} 
\end{equation}
where $\mathcal{B}_t\mathcal{A}_t = \{\bm{B}_t^1\bm{A}_t^1,...,\bm{B}_t^L\bm{A}_t^L\}$ denotes the layer-by-layer multiplication of pattern basis $\mathcal{B}_t$ and knowledge representation $\mathcal{A}_t$.
Upon completion of the finetuning, we transmit $\mathcal{A}_t$ and $\mathcal{B}_t$ into the long-term memory for the subsequent knowledge integration.

\begin{figure}[t]
    \centering
    \includegraphics[width=1.0\columnwidth]{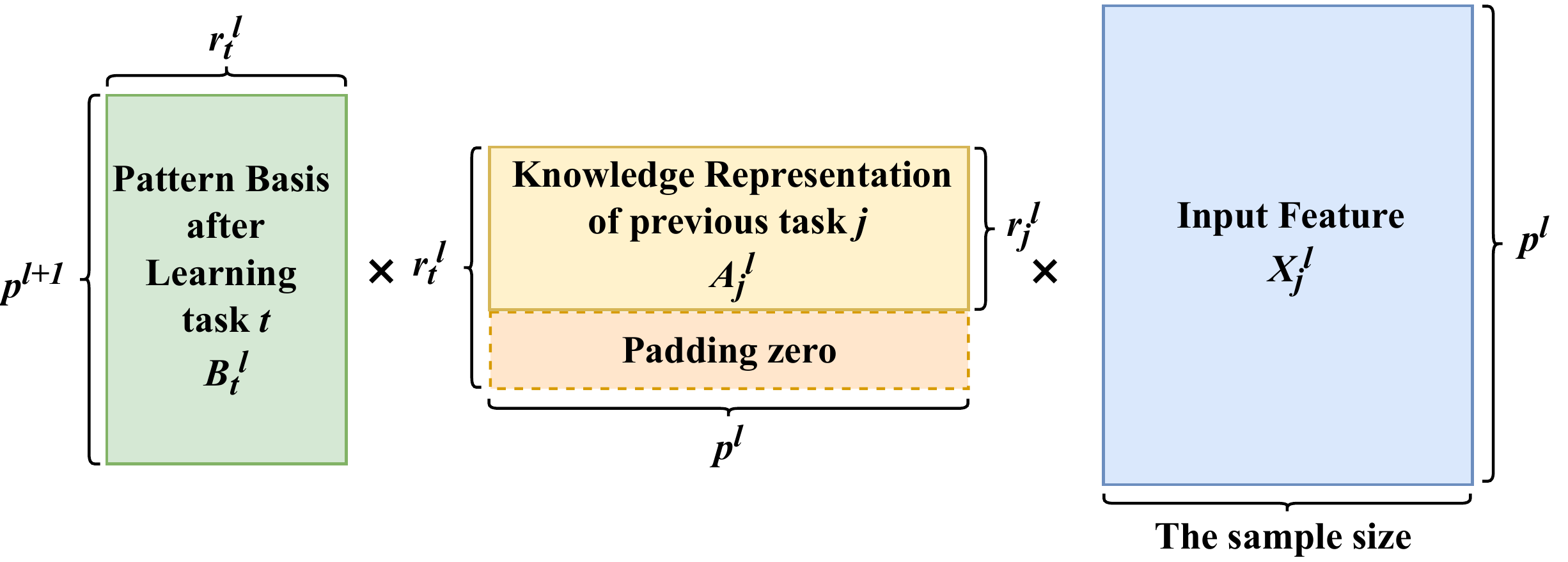}
    \caption{
        Diagram of the zero-padding operation during multiplying the task-$t$ pattern basis $\bm{B}_t^l\in\mathbb{R}^{p^{l+1}\times r_t^l}$ with previously learned task-$j$ ($j<t$) knowledge representation $\bm{A}_j^l\in\mathbb{R}^{r_j^l \times p^{l}}$ in Eq.~\eqref{equation:change2}. 
    }
    \label{fig:padding}
\end{figure}

\begin{figure}[t]
    \centering
    \includegraphics[width=1.0\columnwidth]{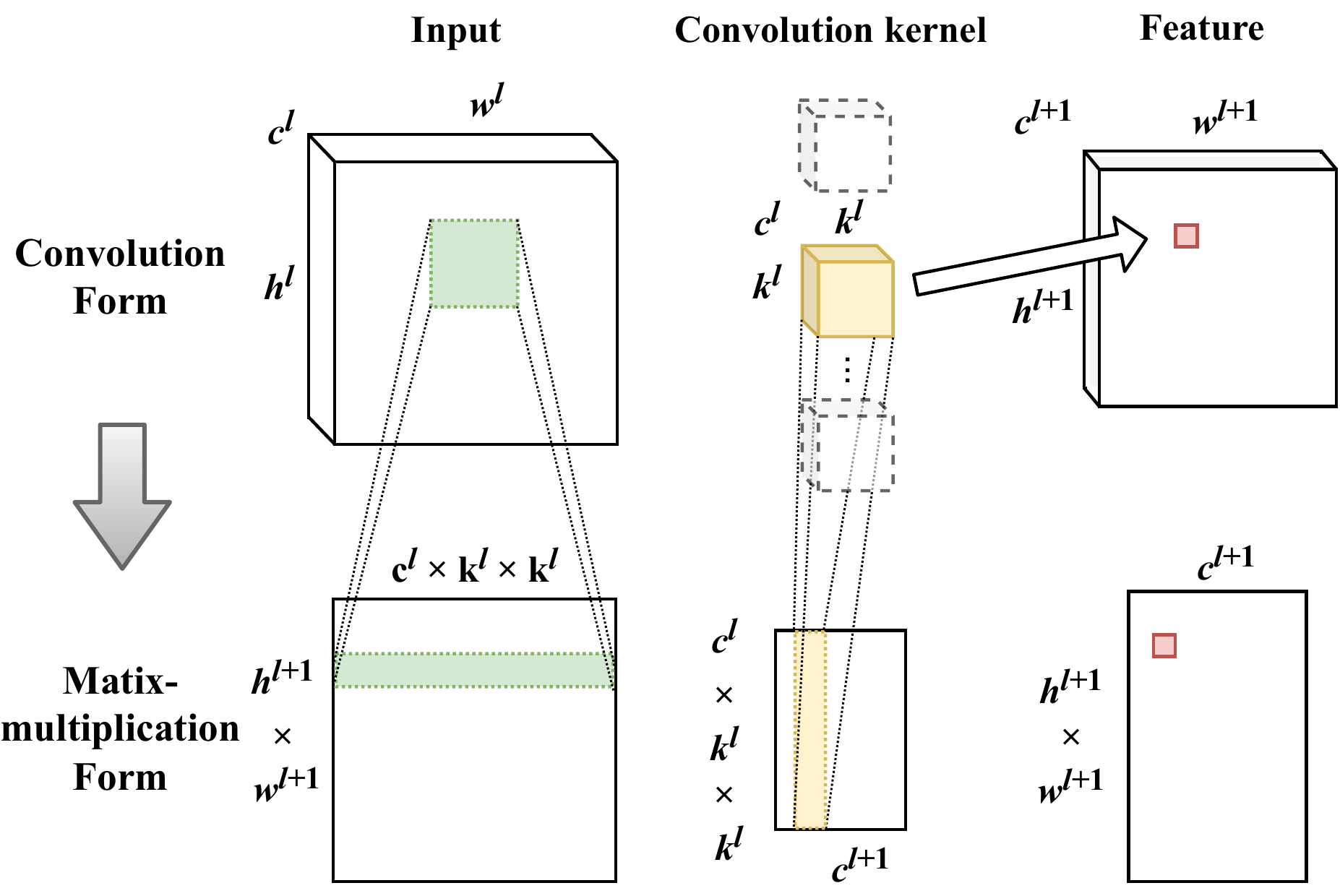}
    \caption{
        Diagram of converting a convolutional layer into a linear layer. 
        $w^l$, $h^l$, and $c^l$ denote the width, height, and channel of the input tensor, respectively; $w^{l+1}$, $h^{l+1}$, and $c^{l+1}$ correspond to the width, height, and channel of the output feature tensor, respectively, and $k^l$ signifies the size of the convolution kernel.
    }
    \label{fig:cnntofc}
\end{figure}
\begin{figure*}[t]
    \centering
    \includegraphics[width=1.0\textwidth]{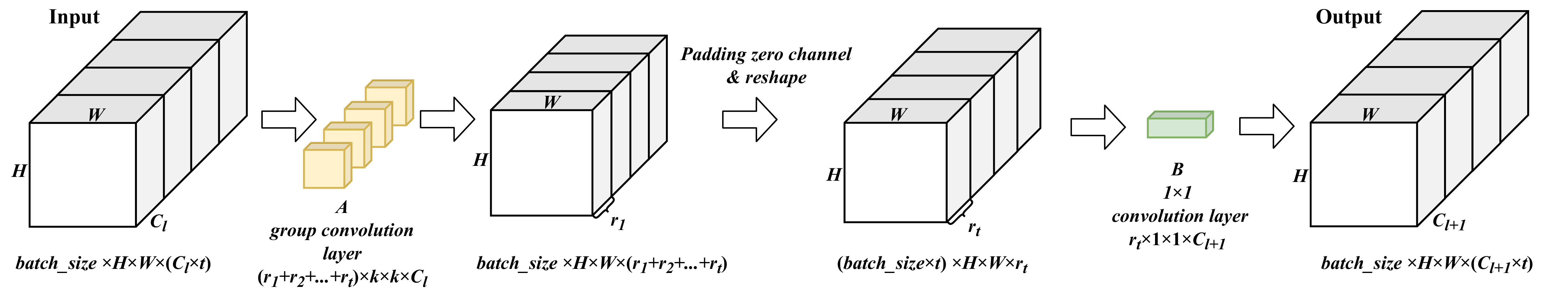}
    \caption{
        Diagram of the $l$-th convolutional layer of long-term memory after knowledge integration, which turns into a group convolutional layer (corresponding to knowledge representations) and a $1\times 1$ convolutional layer (corresponding to pattern basis). Between them, we supplement an additional zero-channel padding operation to match the dimension.
    }
    \label{fig:knowledge_integration}
\end{figure*}

\subsection{Knowledge Integration} \label{sec:knowledge_integration}

Next, \textit{Co$^2$In} integrates all acquired knowledge to enable efficient inference. The key idea is to rearrange the multiplication order in line with the associative law:
\begin{equation}
  \begin{aligned}
    \bm{Z}_t^l \!=\!\! \hat{\bm{W}_t}^l \! \bm{X}^l \!\!=\! (\bm{B}_t^l\bm{A}_t^l)\bm{X}^l \!\!=\! \bm{B}_t^l(\bm{A}_t^l\bm{X}^l)  \ \ \ \bm{X}^{l+1} \!=\! \sigma(\bm{Z}^l).
  \end{aligned}
  \label{equation:change} 
\end{equation}
Equation~\eqref{equation:change} provides an opportunity to integrate all knowledge representations into a separate layer:
\begin{equation}
\begin{aligned}
&[\bm{Z}_1^l,\dots ,\bm{Z}_t^l] = \bm{B}_t^l [\bm{A}_1^l \bm{X}_1^l,\dots ,\bm{A}_t^l \bm{X}_t^l] \\
&[\bm{X}_1^{l+1}, \dots, \bm{X}_t^{l+1}] = \sigma \left( [\bm{Z}_1^l,\dots ,\bm{Z}_t^l] \right).
\end{aligned}
\label{equation:change2} 
\end{equation}
Here, for simplicity, we omit the zero-padding operation on $\bm{A}_j^l$ when $j < t$. This operation pads zeros for the superfluous dimensions in $\bm{A}^l_j$ (corresponding to the basis of $\bm{B}_t^l$ that are never used by $\bm{A}^l_j$), as depicted in Fig.~\ref{fig:padding}.

Knowledge integration facilitates parallel inference across tasks.
Given a set of query samples $\bm{X}$, \textit{Co$^2$In} begins by duplicating the samples $t$ times, yielding inputs $[\bm{X}_1^1, ..., \bm{X}_t^1]$ corresponding $t$ tasks at the first layer. Each layer then transforms the feature according to Eq.~\eqref{equation:change2}, eventually generating the classification logits $[\bm{Z}_1^L,...,\bm{Z}_t^L]$. The class associated with the highest logit value is selected as the final classification result:
\begin{equation}
  \begin{aligned}
    \hat{\bm{Y}} = \arg \max [\bm{Z}_1^L,...,\bm{Z}_t^L].
  \end{aligned}
  \label{equation:classification} 
\end{equation}
Note that this parallel inference is necessary only for Class-IL or when a query batch contains data from multiple tasks. In typical Task-IL settings, where a query batch originates from a single task and the task ID is available, it suffices to compute only the logits associated with that specific task (i.e., $\bm{Z}_t^L$ for the query task $t$).

Knowledge integration significantly enhances computational efficiency. Consider a straightforward serial inference that directly re-composes each task parameter. Upon receiving query samples $\bm{X}$, the serial approach first reconstructs the task-1 parameters via $\mathcal{B}_t\mathcal{A}_1$, then loads them into working memory for predicting logits for task-1 classes $\bm{Z}^L_1 = f(\bm{X}; \mathcal{B}_t\mathcal{A}_1)$. This prediction process is then repeated iteratively for other task's knowledge representation, yielding logits $\{\bm{Z}^L_1, ..., \bm{Z}^L_t\}$. Notably, due to memory constraints, it is impractical to recompose all task parameters $\{ \mathcal{B}_t\mathcal{A}_1, ..., \mathcal{B}_t\mathcal{A}_t \}$ simultaneously. Assuming $r_t^{l}$ signifies the number of basis after learning task $t$, the computational complexity of serial prediction for the $l$-th layer within task $t$ approximates $\mathcal{O}(p^{l+1} p^{l} n + p^{l+1}  r_t^{l}  p^{l}) = \mathcal{O}((r_t^{l} + n) {p^{l}}^2 )$, given that $p^{l} \approx  p^{l+1}$. Here, $p^l$ represents the input dimension for the $l$-th layer, and $n$ is the number of samples. This serial prediction is time-consuming, especially for deeper neural network layers with high dimensionality (i.e., large $p^l$).

In contrast, parallel inference brought by knowledge integration is markedly more efficient. On one hand, the computational complexity in Eq.~\eqref{equation:change} reduces to $\mathcal{O}(r_t^{l} p^{l} n + p^{l+1} r_t^{l} n) = \mathcal{O}(r_t^{l} n p^{l})$, which is significantly lower than that of the serial inference ($\mathcal{O}((r_t^{l} + n) {p^{l}}^2)$). On the other hand, Eq.~\eqref{equation:change2} involves concatenated matrix multiplication, benefiting more from GPU acceleration.

Upon knowledge integration finish, \textit{Co$^2$In} completes the learning of task $t$. The entire learning process of \textit{Co$^2$In} is summarized in Algorithm~\ref{alg:training}.

\newcommand\mycommfont[1]{\footnotesize\ttfamily\textcolor{blue}{#1}}
\SetCommentSty{mycommfont}
\begin{algorithm}[t]
	\caption{Incremental learning of \textit{Co$^2$In}}
	\label{alg:training}
	\KwIn{Datasets $\{\mathcal{D}_1,\dots,\mathcal{D}_T\}$}
	\KwOut{Knowledge representations $\mathcal{A}^*$; Pattern basis $\mathcal{B}$}
	Initialize $\mathcal{A}^* = \{{\bm{A}^*}^l=\emptyset\}_{l=1}^L$, $\mathcal{B} = \{\bm{B}^l=\emptyset\}_{l=1}^L$;\\
	\For{$t=1,\dots,T$}{
        Initialize $\mathcal{A}_t = \{\bm{A}_t^l=\emptyset\}_{l=1}^L$\\
        // knowledge encoding\\
        $\mathcal{W}_t \longleftarrow \mathop{\arg\min}\limits_{\mathcal{W}} L_{ce}(\mathcal{W}, \mathcal{D}_t) + \lambda L_{reg}(\mathcal{W})$\\ 
        // knowledge consolidation\\
        \For{$l=1,\dots,L$}{
            $\bm{U}^l_{t}\bm{\Sigma}^l_{t} {\bm{V}^l_{t}}^\top \longleftarrow \text{SVD}\left(\bm{Z}^l_t - {\bm{B}^l}{\bm{B}^l}^\top \bm{Z}^l_t\right)$\\
            $\bm{B}^l \longleftarrow [\bm{B}^l, {\bm{U}^l_{t}}]$\\
            $\bm{A}_t^l \longleftarrow {\bm{B}^l}^\top \bm{W}_t^l$ \\
        }
        $\mathcal{A}_t \longleftarrow \mathop{\arg\min}\limits_{\mathcal{A}_t} L_{ce}(\mathcal{B}\mathcal{A}_t, \mathcal{D}_t)$ \\
        // knowledge integration\\
        \For{$l=1,\dots,L$}{
            ${\bm{A}^*}^l \longleftarrow [{\bm{A}^*}^l, \bm{A}_t^l]$ \\
        }
	}
    return $\mathcal{A}^*$, $\mathcal{B}$
\end{algorithm}

\begin{figure*}[!t]
    \centering
    \includegraphics[width=0.9\textwidth]{img/toy_exp.pdf}
    \caption{
        Contour map for logits corresponding to a task-1 class (cluster with the red star) output by a piece-wise linear or quadratic classifier trained with task-1 samples. 
        The linear classifier accurately identifies task-1 samples (a). Due to linear extrapolation, it produces relatively high logits for task-2 samples (b), which could result in misclassification.
        In contrast, the logits output by the quadratic classifier can not only contribute reasonable classification within task-1 samples (c) but is also suitable for task-2 samples (d).
    }
    \label{fig:toy_dataset}
\end{figure*}

\subsection{Extension to Convolutional Neural Networks} \label{sec:cnn}

In previous sections, the discussion revolved around the application of \textit{Co$^2$In} within fully connected neural networks. This section presents a straightforward strategy for extending our method to convolutional neural networks. The fundamental principle behind this extension involves converting network layers between convolution form and matrix-multiplication form.

\textbf{Knowledge consolidation with matrix-multiplication form.} 
Convolution is constituted by pixel-wise multiplications between the convolutional kernel and the image patches. This inherent linearity inspires us to reshape both the kernel and the patches into matrix form. As illustrated in Fig.~\ref{fig:cnntofc}, the $l$-th convolution layer with a kernel size of $c^{l+1} \times c^{l} \times k^l \times k^l$ is transformed into a matrix of dimensions $c^{l+1} \times(c^{l} \times k^l \times k^l)$. This transformation enables the seamless application of knowledge consolidation methods to convolutional layers without any modification.

\textbf{Knowledge integration with convolution form.} After knowledge consolidation, the convolution kernel of $l$-th layer is factorized into two matrices: a pattern basis matrix with the size of $c^{l+1} \times r_t^{l}$ and a knowledge representation matrix with the size of $r_t^{l} \times (c^l \times k^l \times k^l)$. An intriguing aspect of this decomposition is that the pattern basis can be reinterpreted as a $1\times 1$ convolution kernel, while the knowledge representation can be converted into a convolution kernel with dimensions $r_t^{l} \times c^{l} \times k^l \times k^l$, as visualized in Fig.~\ref{fig:process} (c). In line with the knowledge integration process on fully connected layers, all kernel-formed knowledge representations are encapsulated within a grouped convolutional layer to facilitate efficient inference, as shown in Fig.~\ref{fig:knowledge_integration}.

\subsection{Quadratic Classifier} \label{sec:quadratic_classifier}

\textit{Co$^2$In} remains an inherent challenge in Class-IL: the lack of cross-task discriminability during inference because of the unavailability of task identification. Recall that Eq.~\eqref{equation:classification} derives the classification result by comparing the prediction logits obtained from all $t$ tasks, each functions as an autonomous subnetwork. However, as each subnetwork undergoes an independent training process, their output logits may not cross-task discriminative, potentially influencing the Class-IL performance of \textit{Co$^2$In}.

We conducted an exploratory experiment through a toy IL dataset to assess the cross-task discriminability of outputs from these independently trained subnetworks. 
The toy dataset consists of six classes, each sampled from a Gaussian distribution. As depicted in Fig.~\ref{fig:toy_dataset}, the first four classes (colored purple) belong to task 1, while the remaining two classes (colored green) come from task 2. A two-layer fully connected model was trained exclusively on task-1 data. Contour maps were subsequently generated to analyze the behavior of logits across both tasks. Fig.~\ref{fig:toy_dataset} (a) demonstrates that the logits enable accurate classification within task 1, evidenced by higher values for corresponding class samples and lower values for other samples within the same task. Conversely, as depicted in Fig.~\ref{fig:toy_dataset} (b), the logits for task-2 samples become disproportionately large, leading to misclassification as task-1 classes. This behavior highlights the infeasibility of directly concatenating these misleading logits described in Eq.~\eqref{equation:classification}.

One potential explanation for the above issue can be attributed to the ``push'' factor~\cite{yang2022we} inherent in the cross-entropy loss. To be more specific, the cross-entropy loss can be decomposed into two components:
\begin{equation}
  \label{equation:linear_celoss} 
  \begin{aligned}
    &L_{ce}(\mathcal{W}, \mathcal{D}_t)= \!\!\!\!\!\! \sum_{(\bm{x}_k, y_k) \in \mathcal{D}_t} \!\!\!\!\!\!\! -\log\frac{\exp({\bm{W}^{L}_{y_k}}^\top \bm{x}_k^{L-1})}{\sum_i \exp({\bm{W}^{L}_i}^\top \bm{x}_k^{L-1})} \\
    &=\!\!\!\sum_{(\bm{x}_k, y_k) \in \mathcal{D}_t} \!\!
    \underbrace{-{\bm{W}^{L}_{y_k}}^\top \bm{x}_k^{L-1}}_{\text{pull term}}
    \!+ \!
    \underbrace{\log \sum_i \exp({\bm{W}^{L}_i}^\top \bm{x}_k^{L-1})}_{\text{push term}}
    ,
  \end{aligned}
\end{equation}
where $\bm{W}^{L}_i$ denotes the parameter of the final layer corresponding to class $i$, and $\bm{x}_k^{L-1}$ represents the feature of $\bm{x}_k$ extracted by the penultimate layer. 
In the last line of Eq.~\eqref{equation:linear_celoss}, the first term behaves as a ``pull'' factor that attracts the feature towards the classifier prototype $\bm{W}^{L}_{y_k}$, while the second term serves as the ``push'' factor that repels the features from the prototypes of other classes  $\bm{W}^{L}_i$.
However, the ``push'' factor is unbounded and tends to dominate the overall loss in data imbalance scenarios \cite{yang2023neural}, such as incremental learning. Figure~\ref{fig:toy_dataset} (b) evidences how the pronounced effect of the ``push'' factor can incur misclassifications.

To address this issue, inspired by Deep-SVDD \cite{deepsvdd}, we introduce a quadratic classifier for \textit{Co$^2$In} designed to emphasize the ``pull'' factor in the cross-entropy loss and enhance recognition of out-of-distribution (i.e., other tasks) classes.
The quadratic classifier is defined as follows:
\begin{equation*}
  \begin{aligned}
    \bm{Z}^L = ({\bm{W}^{L}}^\top \bm{X}^{L-1} - \bm{b})^2,
  \end{aligned}
\end{equation*}
where $\bm{b}$ is a learnable centroid, $(.)^2$ is element-wise square operation.
Combining the quadratic classifier with the cross-entropy loss yields:
\begin{equation}
  \label{equation:quadratic_celoss} 
  \begin{aligned}
    L_{ce}(\mathcal{W}, \mathcal{D}_t)&=\!\!\!\!\!\!\!\!\sum_{(\bm{x}_k, y_k) \in \mathcal{D}_t} \!\!\!\!\!\!\!\! \!- \! \log \!\frac{\exp(-({\bm{W}^{L}_{y_k}}^\top \bm{x}_k^{L-1} - \bm{b}_{y_k})^2)}{\sum_i \!\exp(-({\bm{W}^{L}_i}^\top \bm{x}_k^{L-1} - \bm{b}_i)^2)} \\
    =\sum_{(\bm{x}_k, y_k) \in \mathcal{D}_t}& 
    \underbrace{({W^{L}_{y_k}}^\top \bm{x}_k^{L-1} - \bm{b}_{y_k})^2}_{\text{pull term}} \\
    & + \underbrace{\log \sum_i \exp(-({\bm{W}^{L}_i}^\top \bm{x}_k^{L-1} - \bm{b}_i)^2)}_{\text{push term}}
    ,
  \end{aligned}
\end{equation}
where $\bm{b}_i$ denotes the centroid parameter corresponding to class $i$.
In Eq.~\eqref{equation:quadratic_celoss}, the ``pull'' term aims to quadratically draw features into the vicinity of the class centroid $\bm{b}_{y_k}$. Conversely, the ``push'' term diminishes to zero as features diverge from other class centroid $\bm{b}_i$. 
Therefore, by bounding the ``push'' factor, the quadratic classifier minimizes its impact on the overall loss. This aspect is empirically substantiated in Fig.~\ref{fig:toy_dataset} (c) and (d), where the quadratic classifier establishes a much tighter decision boundary compared to the linear classifier, ensuring cross-task discriminability.

The quadratic classifier is not exclusive to Co$^2$In. We identify two key factors that influence its effectiveness. First, the stability of the method: the compact decision boundaries of the quadratic classifier make it highly sensitive to variations in the extracted features, as the logits change quadratically in response to these variations. Second, the model’s ability to learn new features, such as the ability to expand the basis in Co$^2$In. The new features serve as a clear signal for distinguishing between new and old task classes. Methods that possess both of these characteristics, such as GPM, can benefit from combining with the quadratic classifier to strengthen their Class-IL capability, whereas other methods may not achieve the same improvement. For more details, please refer to Section~\ref{sec:exp_quadratic_classifier}.




\section{Experiment} \label{sec:experiment}

\subsection{Settings}

\textbf{Datasets.} 
We choose three datasets during the evaluation: CIFAR-10 \cite{cifar10}, CIFAR-100 \cite{cifar10}, and Tiny-ImageNet \cite{tinyimagenet}. CIFAR-10 includes ten classes of $32 \times 32$ color images. We divide it into five incremental batches, with each batch containing samples of 2 classes. CIFAR-100 consists of 60000 $32 \times 32$ color images that fall into 100 classes. We split it into ten incremental batches in most of our experiments. Tiny-ImageNet comprises 200 categories of $64 \times 64$ color images. The 200 classes are divided into ten tasks equally, each including samples from 20 classes.

\textbf{Comparison baselines.} 
We select several classic and latest IL methods for comparison, including Gradient Episodic Memory (GEM) \cite{gem}, Dark Experience Replay (DER and DER++) \cite{der}, Contrastive Continual Learning (Co2L) \cite{co2l},  Hierarchical Correlations Replay \cite{hcr}, NPCL \cite{npcl}, Learning without Forgetting (LwF) \cite{lwf}, Synaptic Intelligence (SI) \cite{si}, Orthogonal Weights Modification (OWM) \cite{OWM}, DeepInversion (DI) \cite{di}, ILCOC \cite{ilcoc}, Gradient Projection Memory (GPM) \cite{gpm}, Adam-NSCL \cite{adam-nscl}, Prototype Augmentation and Self-Supervision (PASS) \cite{pass}, Efficient Feature Transformations (EFT) \cite{eft}, Always Be Dreaming (ABD) \cite{abd}, Filter Atom Swapping (FAS) \cite{fas}, Winning SubNetworks (WSN) \cite{wsn}, DCPOC \cite{dcpoc}, VDFD \cite{vdfd}. We also introduce a fundamental method named Joint, which is trained on all task data jointly. Joint does not adhere to the IL training setup, which is generally seen as the upper-bound performance. In addition, we report the results of the conference version of our method, referred to as BMKP \cite{bmkp}. BMKP can be thought of as a simplified \textit{Co$^2$In} without the knowledge integration and the quadratic classifier.

\textbf{Performance metrics.} 
We follow \cite{gem, der} and employ classification accuracy (ACC) on all learned tasks after IL as the primary metric for performance evaluation. To eliminate the stochasticity inherent in neural network training, we report the average performance of five runs with random seeds. We also provide memory usage in megabytes (MB) for certain methods that require additional storage beyond the network parameters. The additional storage includes exemplars for DER, gradient basis for GPM and Adam-NSCL, and long-term memory for BMKP and \textit{Co$^2$In}.

\begin{table*}
    \caption{
      Average accuracy (\%) and the standard deviation over five runs with random seeds. Joint is the model jointly trained with data from all tasks. (-) means that the result was unavailable, due to the intractable training time by our implementation (GEM) or unsuitable setting (GPM and Adam-NSCL for Class-IL). ($\dag$) and ($\ddag$) imply the results are quoted from HCR and the origin paper, where the standard deviations is not provided. \textbf{Bold} and \underline{underlined} denote the best and the second-best performance. Results with (*) imply the leveraging of rotate class augmentation.
    }
    \label{tab:compare}
    \centering
  \renewcommand{\arraystretch}{0.6} 
    \setlength{\tabcolsep}{3.0mm}{
    \begin{tabular}{l|c|c|c|c|c|c|c|c}
      \toprule
      \multirow{2}{*}{\textbf{Methods}}   & \multirow{2}{*}{\textbf{Venue}} & \multirow{2}{*}{\textbf{Buffer}}    & \multicolumn{2}{c|}{\textbf{CIFAR-10}} & \multicolumn{2}{c|}{\textbf{CIFAR-100}} & \multicolumn{2}{c}{\textbf{Tiny-ImageNet}}    \\
      
       & & & Class-IL & Task-IL & Class-IL & Task-IL & Class-IL & Task-IL \\
      
      \midrule              
      Joint                  & -            & -   & 91.29         & 98.07         & 70.31         & 91.18           & 58.07         & 82.01             \\
      \midrule                                 
      GEM \cite{gem}         & NIPS2017     & 500 & 23.66$\pm$1.2 & 85.14$\pm$2.1 & 11.76$\pm$0.7 & 62.80$\pm$2.7   & -             & -              \\
      DER \cite{der}         & NIPS2020     & 500 & 70.57$\pm$1.0 & 93.41$\pm$0.3 & 34.24$\pm$1.4 & 73.26$\pm$1.3   & 17.75$\pm$1.1 & 51.78$\pm$0.9 \\
      DER++ \cite{der}       & NIPS2020     & 500 & 73.48$\pm$0.3 & 94.10$\pm$0.3 & 36.52$\pm$2.0 & 74.86$\pm$1.1   & 19.38$\pm$1.4 & 51.91$\pm$0.7 \\
      Co2L$^\dag$ \cite{co2l}& ICCV2021     & 500 & \underline{74.26}& \underline{95.90}& 30.45   & 79.61           & 20.12         & 53.04         \\
      HCR$^\dag$ \cite{hcr}  & KBS2022      & 500 & \textbf{74.64}& \textbf{96.01}& 33.71         & 80.55           & 22.05         & 54.73         \\
      NPCL$^\ddag$ \cite{npcl}& NIPS2023    & 500 & 71.34$\pm$1.5 & -             & 37.43         & -               & 15.29$\pm$1.0 & -              \\
      \midrule                              
      LwF \cite{lwf}         & TPAMI2017    & -   & 19.59$\pm$0.1 & 91.91$\pm$0.7 & 10.05$\pm$0.6 & 63.78$\pm$4.3   & 9.36$\pm$0.1  & 58.61$\pm$1.8         \\
      SI \cite{si}           & ICML2017     & -   & 19.47$\pm$0.2 & 69.40$\pm$1.9 & 7.42$\pm$0.5  & 62.21$\pm$2.6   & 6.77$\pm$0.1  & 60.91$\pm$1.3  \\
      OWM  \cite{OWM}        & NMI2019      & -   & 47.50$\pm$0.5 & 81.00$\pm$0.1 & 27.63$\pm$0.5 & 60.64$\pm$0.4   & 15.30$\pm$0.3 & 40.89$\pm$0.3  \\
      DI  \cite{di}          & CVPR2020     & -   & 24.14$\pm$2.2 & 94.46$\pm$0.6 & 11.19$\pm$0.3 & 68.43$\pm$2.1   & 8.19$\pm$0.2  & 66.12$\pm$0.9  \\
      ILCOC \cite{ilcoc}     & CVPRW2021    & -   & 38.71$\pm$4.2 & 94.74$\pm$0.3 & 22.19$\pm$1.6 & 69.87$\pm$1.0   & 15.78$\pm$0.4 & 54.09$\pm$0.7  \\
      GPM \cite{gpm}         & ICLR2021     & -   & -             & 86.58$\pm$0.9 & -             & 70.93$\pm$0.9   & -             & 59.84$\pm$0.2  \\
      Adam-NSCL \cite{adam-nscl} & CVPR2021 & -   & -             & 87.23$\pm$0.4 & -             & 65.69$\pm$0.2   & -             & 59.98$\pm$0.7  \\
      PASS \cite{pass}       & CVPR2021     & -   & 53.12$\pm$0.3$^*$ & 86.07$\pm$0.2$^*$ & 31.80$\pm$0.7$^*$ & 77.30$\pm$0.4$^*$   & \underline{28.48$\pm$0.6}$^*$ & 62.87$\pm$0.4$^*$   \\
      EFT \cite{eft}         & CVPR2021     & -   & 55.12$\pm$1.9 & 95.09$\pm$0.3 & 35.99$\pm$1.0 & 79.29$\pm$0.6   & 26.89$\pm$0.6 & 63.88$\pm$0.5  \\
      ABD \cite{abd}         & ICCV2021     & -   & 57.64$\pm$0.3 & 95.11$\pm$0.3 & 33.30$\pm$0.3 & 74.83$\pm$0.5   & 15.80$\pm$0.4 & 46.76$\pm$0.6   \\      
      FAS \cite{fas}         & ICLR2022     & -   & 43.36$\pm$2.8 & 90.89$\pm$1.3 & 25.79$\pm$0.7 & 70.89$\pm$0.6   & 24.29$\pm$0.3 & 60.10$\pm$0.2   \\
      WSN \cite{wsn}         & ICML2022     & -   & 19.14$\pm$0.1 & 92.99$\pm$0.4 & 8.79$\pm$0.2  & 81.10$\pm$0.7   & 7.68$\pm$0.3  & 67.39$\pm$0.7   \\
      DCPOC \cite{dcpoc}     & PR2023       & -   & 44.30$\pm$0.6 & 90.43$\pm$0.3 & 25.80$\pm$0.4 & 72.20$\pm$0.2   & 19.75$\pm$0.1 & 53.08$\pm$0.2   \\
      VDFD \cite{vdfd}       & TPAMI2023    & -   & 59.90$\pm$1.8$^*$ & 94.24$\pm$0.5$^*$ & \textbf{38.38$\pm$0.9}$^*$& \underline{83.30$\pm$0.2}$^*$& 26.21$\pm$0.7$^*$ & 66.62$\pm$0.3$^*$       \\
      \midrule                                              
      BMKP (ours)            & CVPR2023     & -   & 29.70$\pm$1.0 & 94.49$\pm$0.2 & 16.89$\pm$0.6 & 79.62$\pm$0.8   & 8.91$\pm$0.2  & \textbf{70.36$\pm$0.2}      \\
      \textit{Co$^2$In} (ours) & - & - & 56.21$\pm$5.0$^*$ & 95.21$\pm$0.3$^*$ & \underline{37.54$\pm$0.9} & \textbf{84.21$\pm$1.0} & \textbf{31.23$\pm$0.5} & \underline{68.75$\pm$0.7}          \\
      \bottomrule
    \end{tabular}
    }
\end{table*}

\begin{figure}[!t]
    \centering
    \includegraphics[width=0.9\columnwidth]{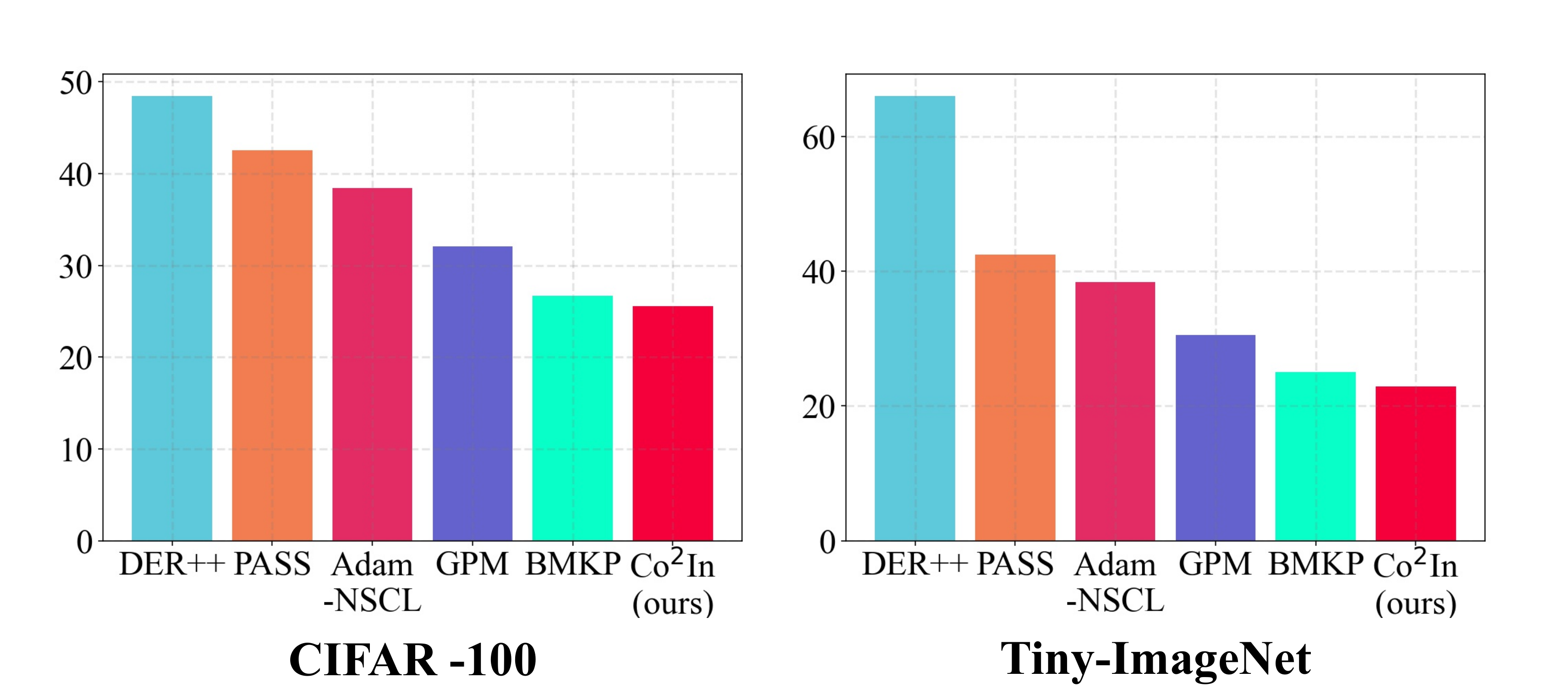}
    \caption{
        Memory usage (MB) on CIFAR-100 and Tiny-ImageNet.
    }
    \label{fig:memory}
\end{figure}

\textbf{Implementation details.}
To ensure a fair performance comparison, we evaluated the performance of all methods under similar memory constraints. For the projection-based methods (GPM, Adam-NSCL, BMKP, and \textit{Co$^2$In}) in which the memory usage depends on the width of the network, we follow \cite{gpm} and employ a narrower version of ResNet18 \cite{resnet}. In contrast, all the other baselines employ the standard ResNet18 as the network backbone. For rehearsal-based methods (GEM, DER, DER++, Co2L, HCR) that necessitate additional memory to preserve exemplars, we equip them with a buffer capable of storing up to 500 samples.

\textbf{Hyper-parameters.}
All hyper-parameters are searched through a validation set obtained by sampling ten percent samples from the training set.
We optimize \textit{Co$^2$In} with stochastic gradient descent (SGD) optimizer over 100 iterative epochs.
We tailor the batch size to 32, 16, and 128 for CIFAR-10, CIFAR-100, and Tiny-ImageNet. The learning rates are configured at 2e-3, 2e-2, and 2e-2 for the three datasets, respectively.
The balance weight $\lambda$ is tuned within the range of $\{0, 0.1, 1, 10, 100\}$, with the optimal values selected as 0.1, 1, and 10 for CIFAR-10, CIFAR-100, and Tiny-ImageNet. Additionally, we adjust the retained singular-value ratio (Cf. Eq.~\eqref{equation:new_pattern}) across the range of $\{95\%, 96\%, 97\%, 98\%, 99\%\}$, ultimately settling on values of 97\%, 97\%, and 98\% for the three datasets, respectively. For more details, please refer to the code repository \footnote{https://github.com/SunWenJu123/Co2In}.

\begin{table*}[h]
  \caption{
    Ablation study of \textit{Co$^2$In} on CIFAR-100 and Tiny-ImageNet.
  }
  \label{tab:ablation}
  \centering
  \renewcommand{\arraystretch}{0.6} 
  \setlength{\tabcolsep}{15pt}  
  \begin{tabular}{l|cc|cc}
    \toprule
    \multirow{2}{*}{\textbf{Methods}} & \multicolumn{2}{c|}{\textbf{CIFAR-100}} & \multicolumn{2}{c}{\textbf{Tiny-ImageNet}}          \\
    & Class-IL & Task-IL & Class-IL & Task-IL \\
    \midrule
    (a) \textit{Co$^2$In} w/o representation compaction regularizer  & 34.75$\pm$0.5 & 83.04$\pm$0.5 & 27.05$\pm$2.0 & 65.00$\pm$2.2   \\
    (b) \textit{Co$^2$In} w/o basis updating                         & 6.88$\pm$5.9  & 36.48$\pm$17.7& 2.90$\pm$1.0  & 27.23$\pm$3.6   \\
    (c) \textit{Co$^2$In} w/o retraining                             & 36.12$\pm$0.7 & 81.78$\pm$3.0 & 28.91$\pm$0.7 & 65.68$\pm$1.9   \\
    (d) \textit{Co$^2$In} w/o quadratic classifier                   & 16.90$\pm$0.5 & 79.65$\pm$0.4 & 8.81$\pm$0.1  & 69.31$\pm$0.5   \\
    \textit{Co$^2$In}                                                & 37.54$\pm$0.9 & 84.21$\pm$1.0 & 31.23$\pm$0.5 & 68.75$\pm$0.7   \\
    \bottomrule
  \end{tabular}
\end{table*}
 
\subsection{Comparison Result} 

This section provides a comparison between \textit{Co$^2$In} and baseline methods. The performance is presented in Table~\ref{tab:compare} for both Task-IL and Class-IL settings. Additionally, Fig.~\ref{fig:memory} illustrates the memory usage of various methods.

For the small-scale CIFAR-10 dataset, \textit{Co$^2$In} is competitive with the rotate class augmentation technique\footnote{Augment class by rotating the training data in 90, 180, and 270 degrees, a widespread incremental learning technique adopted by the comparison methods PASS\cite{pass} and VDFD\cite{vdfd}.}, showcasing the best Task-IL performance and the third-best Class-IL performance among exemplar-free methods. Our empirical findings suggest that the rotate class augmentation is particularly efficacious in enhancing performance in settings with limited numbers of categories. Section~\ref{sec:ssl_exp} details an in-depth examination of this augmentation.

For CIFAR-100, \textit{Co$^2$In} achieves the second-best Class-IL performance (37.54\%) and the best Task-IL performance (84.21\%), surpassing the second-best VDFD by 0.91\%. For Tiny-ImageNet, under the Task-IL setting, the conference version BMKP obtains the best classification accuracy (70.36\%), while \textit{Co$^2$In} delivers the second-best performance (68.75\%). The difference in performance can be attributed to the distinct classifier employed: \textit{Co$^2$In} relies on quadratic classifiers, while BMKP utilizes linear classifiers. Additionally, under the Class-IL setting, \textit{Co$^2$In} attains the best performance (31.23\%) on Tiny-ImageNet, surpassing the second-best method by 2.75\%.


Furthermore, the competitive performance of \textit{Co$^2$In} is achieved with lower memory utilization. As depicted in Fig.~\ref{fig:memory}, \textit{Co$^2$In} utilizes 25.53 MB of memory when applied to CIFAR-100, which is lower than that of the second-best method by a margin of 1.18 MB. Similarly, on Tiny-ImageNet, \textit{Co$^2$In} consumes 22.89 MB of memory, outperforming the second-best BMKP with 2.16 MB. These results underscore the effectiveness of \textit{Co$^2$In} on its economical use of resources without compromising performance across diverse datasets and learning paradigms.

In conclusion, the aforementioned results substantiate that \textit{Co$^2$In} strikes a good balance between plasticity and stability in incremental learning, attributed to its brain-inspired bi-level memory framework. Additionally, by compacting the decision boundaries, the quadratic classifier effectively boosts the Class-IL performance of \textit{Co$^2$In}.

\begin{figure*}[!t]
    \centering
    \includegraphics[width=0.99\textwidth]{img/sensitive.pdf}
    \caption{
        Incremental learning results of \textit{Co$^2$In} with different $\lambda$ (first row) and SVD ratio (second row) on CIFAR-100 and Tiny-ImageNet.
    }
    \label{fig:sensitive}
\end{figure*}

\subsection{Ablation Study}

This section presents ablation studies on each component.

\textbf{Effectiveness of representation compaction regularizer.}
We start by introducing variation (a), which is derived from \textit{Co$^2$In} by setting the hyperparameter $\lambda$ to 0. As demonstrated in the first row of Table~\ref{tab:ablation}, variation (a) exhibits a degraded performance relative to the original \textit{Co$^2$In}. This observation reveals that the representation compaction regularizer is able to enhance the performance of \textit{Co$^2$In}.

\textbf{Effectiveness of basis updating.} 
In variation (b), knowledge consolidation does not incorporate the basis updating step. Consequently, the knowledge representations of variation (b) are acquired with respect to the task-1 pattern basis. The second row of Table~\ref{tab:ablation} shows a decline in performance for variation (b) across all setups. 
This phenomenon can be attributed to the fact that basis updating expands the range of CKS. Without this expansion, the expression of knowledge would be severely constrained, i.e., low plasticity.

\textbf{Effectiveness of retraining.} 
Based on \textit{Co$^2$In}, variation (c) excludes the retraining step (Cf. Eq.~\eqref{equation:rehearsal}) after knowledge projection. The third row of Table~\ref{tab:ablation} shows that variation (c) exhibits slightly lower performance than \textit{Co$^2$In}. This phenomenon is expected since retraining serves as a fine-tuning process that refines the model performance. 


\textbf{Effectiveness of quadratic classifier.}
In the final variation (d), the inherent quadratic classifier of \textit{Co$^2$In} is substituted with a conventional linear classifier. As can be seen, the Class-IL performance of variation (d) experiences a significant drop, but the Task-IL performance remains consistent. This finding suggests that variation (d) may have a bias toward certain tasks, highlighting the importance of the quadratic classifier for the Class-IL ability of \textit{Co$^2$In}.

\subsection{Analysis of Hyper-Parameters}

\textbf{Sensitivity analysis of $\lambda$ (Cf. Eq.~\eqref{equation:loss}).} 
We first conduct experiments to evaluate the sensitivity of $\lambda$ by setting different values ($0,1,10,100,1000$), and further investigate the effectiveness of the representation compaction regularizer.
The first row in Fig.~\ref{fig:sensitive} depicts the experimental results. when $\lambda=0$ (i.e., the representation compaction regularizer is not applied), \textit{Co$^2$In} consumes the most memory but yields relatively lower accuracy on both Task-IL and Class-IL. 
As the value of $\lambda$ rises, the classification accuracy correspondingly increases, peaking at an optimal point before gradually declining. This phenomenon reveals that the representation compaction regularizer improves the performance of \textit{Co$^2$In}, which we attribute to facilitating the transfer of learned knowledge. 
Furthermore, the memory usage of \textit{Co$^2$In} consistently diminishes as $\lambda$ grows, signifying that the regularizer efficiently improves the memory efficiency of \textit{Co$^2$In} by encouraging the model to represent new knowledge using previous-established patterns.

\textbf{Sensitivity analysis of SVD ratio (Cf. Eq.~\eqref{equation:new_pattern}).}
Subsequently, we evaluate \textit{Co$^2$In} with different SVD ratios: 0.95, 0.96, 0.97, 0.98, 0.99. 
The experimental results are depicted in the second row of Fig.~\ref{fig:sensitive}. As the SVD ratio increases, both accuracy and memory usage remain stable and gradually increase monotonically. 
This phenomenon is owing to the fact that a larger SVD ratio allows \textit{Co$^2$In} to preserve more basis during knowledge consolidation, thereby retaining more knowledge.
We empirically determine an SVD ratio of $0.97$ for CIFAR100 and $0.98$ for Tiny-ImageNet to balance the performance and memory usage.

\begin{table}
  \caption{
    Incremental learning result (ACC \%) of \textit{Co$^2$In} with and without self-supervised technique on CIFAR-10. ``RCA'' indicates the rotate class augmentation technique.
  }
  \label{tab:ssl}
  \centering
  \renewcommand{\arraystretch}{0.6} 
  \setlength{\tabcolsep}{2mm}{
  \begin{tabular}{c|l|c|c}
    \toprule
    Dataset  & \textbf{Methods}                & \textbf{Class-IL}     & \textbf{Task-IL}           \\
    \midrule
    \multirow{8}{*}{\textbf{CIFAR-10}}  & OWM  \cite{OWM}         & 47.50$\pm$0.5              & 81.00$\pm$0.1         \\
                                        & PASS \cite{pass}        & 53.12$\pm$0.3              & 86.07$\pm$0.2                  \\   
                                        & EFT \cite{eft}          & 55.12$\pm$1.9              & 95.09$\pm$0.3         \\
                                        & ABD \cite{abd}          & \textbf{57.64$\pm$0.3}     & \underline{95.11$\pm$0.3}    \\ 
                                        & FAS \cite{fas}          & 43.36$\pm$2.8              & 90.89$\pm$1.3               \\
                                        & DCPOC \cite{dcpoc}      & 44.30$\pm$0.6              & 90.43$\pm$0.3                   \\
    \cline{2-4}                    
    \rule{0pt}{9pt}                     & BMKP (ours)             & 29.70$\pm$1.0              & 94.49$\pm$0.2                  \\
                                        & \textit{Co$^2$In} (ours)         & 38.15$\pm$2.9              & 92.74$\pm$0.7                  \\
                                        & \textit{Co$^2$In} w/ RCA (ours)  & \underline{56.21$\pm$5.0}  & \textbf{95.21$\pm$0.3}       \\
    
    \bottomrule
  \end{tabular}
  }
\end{table}

\begin{table}[t]
\centering
\caption{
Incremental learning result (ACC \%) of Co$^2$In with and without rotate class augmentation (RCA) with different numbers of tasks.
}
\label{tab:rca_tasks}
\resizebox{0.85\columnwidth}{!}{
  \renewcommand{\arraystretch}{0.6} 
\begin{tabular}{l|c|cc}
\midrule                 
\textbf{Method} & \textbf{Partition}   & \multicolumn{2}{c}{\textbf{CIFAR-100}}    \\ 
                &   & Class-IL (\%) & Task-IL (\%)   \\ 
\midrule    
  \textit{Co$^2$In}                             & 10-Split   & 37.54          &  84.21 \\ 
  \rowcolor{gray!20} \textit{Co$^2$In w/ RCA}   & 10-Split   & 29.58          &  84.18 \\
  \textit{Co$^2$In}                             & 25-Split   & 21.75          &  91.45 \\ 
  \rowcolor{gray!20} \textit{Co$^2$In w/ RCA}   & 25-Split   & 27.37          &  91.03 \\ 
  \textit{Co$^2$In}                             & 50-Split   & 5.02           &  91.53 \\ 
  \rowcolor{gray!20} \textit{Co$^2$In w/ RCA}   & 50-Split   & 21.73          &  94.56 \\ 
\midrule    
\textbf{Method} & \textbf{Partition}   & \multicolumn{2}{c}{\textbf{Tiny-ImageNet}}    \\ 
                &   & Class-IL (\%) & Task-IL (\%)    \\ 
\midrule    
  \textit{Co$^2$In}                             & 10-Split   &  31.23         &  68.75\\ 
  \rowcolor{gray!20} \textit{Co$^2$In w/ RCA}   & 10-Split   &  24.07         &  63.25\\ 
  \textit{Co$^2$In}                             & 25-Split   &  20.00         &  78.78\\ 
  \rowcolor{gray!20} \textit{Co$^2$In w/ RCA}   & 25-Split   &  22.00         &  76.98\\ 
  \textit{Co$^2$In}                             & 50-Split   &  7.03          &  83.77\\ 
  \rowcolor{gray!20} \textit{Co$^2$In w/ RCA}   & 50-Split   &  14.40         &  85.38\\ 
  \textit{Co$^2$In}                             & 100-Split  &  3.74          &  92.01\\ 
  \rowcolor{gray!20} \textit{Co$^2$In w/ RCA}   & 100-Split  &  8.41          &  92.17\\ 
\midrule  
\end{tabular}
}
\end{table}

\subsection{Analysis of Rotate Class Augmentation} \label{sec:ssl_exp}

This section delves into the impact of the rotate class augmentation (RCA) on the performance of \textit{Co$^2$In}. As shown in Table~\ref{tab:ssl}, RCA significantly boosts the performance of \textit{Co$^2$In} on CIFAR-10, where each task contains only two classes. Without RCA, the model struggles to generalize due to overfitting on narrow task-specific features. For instance, distinguishing between cars and ships may cause the model to over-rely on background color (e.g., blue water for ships), which does not transfer well to unseen classes. RCA mitigates this by enforcing feature learning through self-supervised rotation tasks, leading to improved generalization and better Class-IL performance.

To further investigate the effect of RCA, we report results on CIFAR-100 and Tiny-ImageNet with different task splits in Table~\ref{tab:rca_tasks}. These results reveal that RCA is particularly effective when the number of classes per task is small. For instance, on CIFAR-100 with 50 tasks (i.e., 2 classes per task), RCA boosts Class-IL accuracy from 5.02\% to 21.73\%. Similarly, on Tiny-ImageNet with 50 tasks, Class-IL improves from 7.03\% to 14.40\%, and with 100 tasks (2 classes per task), it improves from 3.74\% to 8.41\%. This effect diminishes when tasks already contain sufficient class diversity, such as 10-split CIFAR-100 or 10-split Tiny-ImageNet. Therefore, we recommend applying RCA alongside our \textit{Co$^2$In} in scenarios where each task involves a limited number of classes.

\begin{figure}[t]
    \centering
    \includegraphics[width=1.0\columnwidth]{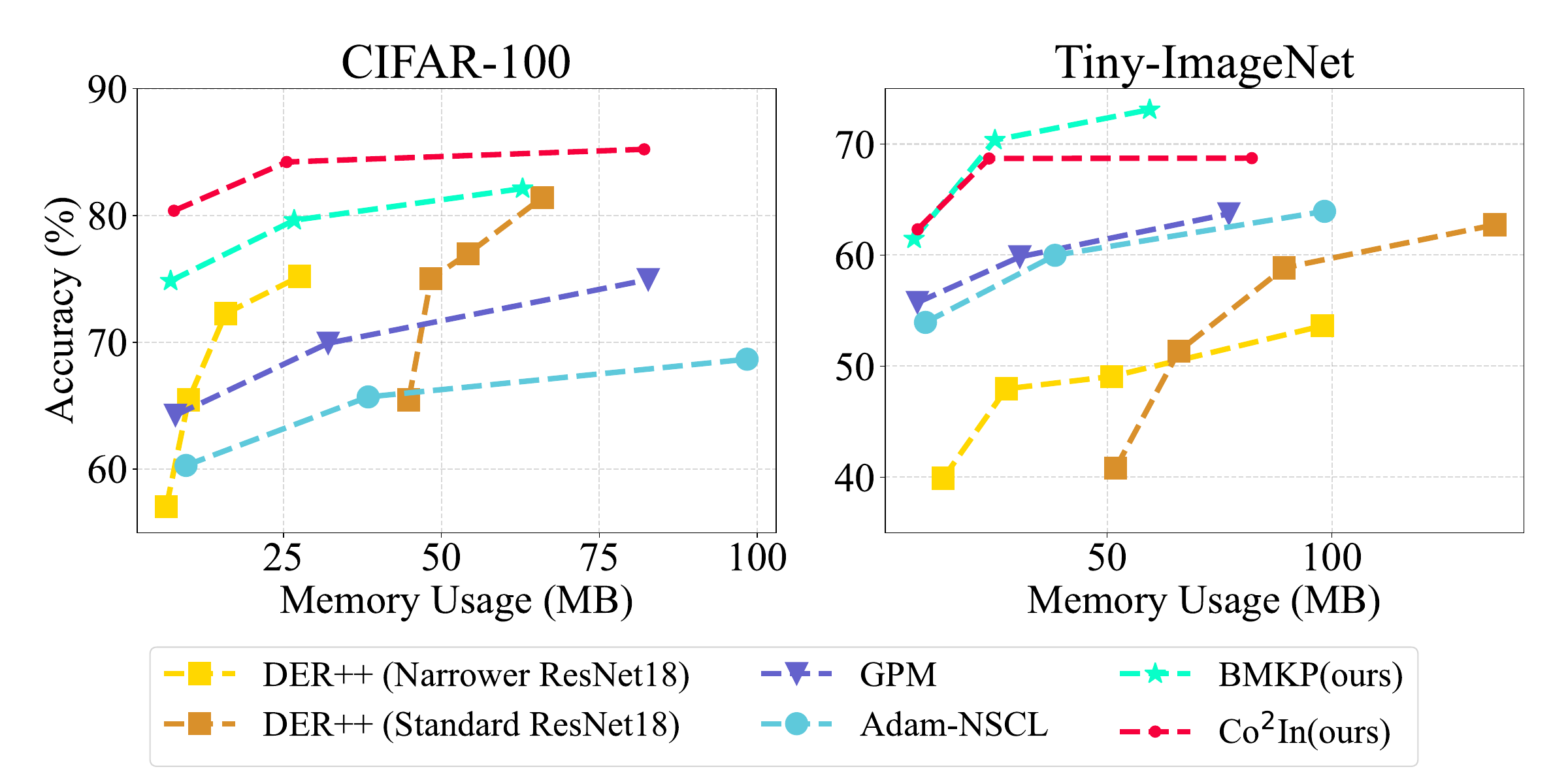}
    \caption{
        Memory usage (MB) versus Task-IL performance (\%) on CIFAR-100 and Tiny-ImageNet.
    }
    \label{fig:memory_com}
\end{figure}

\subsection{Analysis of Memory Efficiency }

This section analyzes the performance of models under different memory usage. For projection-based methods (e.g., GPM, Adam-NSCL, BMKP, and \textit{Co$^2$In}), memory usage is controlled by scaling the width of the network backbone. For the rehearsal-based method (DER++), we evaluate its performance using both the standard and narrower versions of ResNet18 across different numbers of exemplars (200, 500, 1000, 2000).
Figure~\ref{fig:memory_com} illustrates the Task-IL results on CIFAR-100 and Tiny-ImageNet. As can be seen, the performance of DER++ is sensitive to the number of exemplars. As the network grows larger, the effect of the exemplar number becomes more significant. However, as memory is limited, DER++ shows a substantial performance drop. In contrast, \textit{Co$^2$In} performs comparable memory efficiency to BMKP. Both methods achieve higher accuracy than others under the same memory budget. 
These findings demonstrate the effectiveness of \textit{Co$^2$In} in memory-sensitive scenarios.

\begin{figure*}[h]
  \centering
  \includegraphics[width=0.90\textwidth]{img/confusion.pdf}
  \caption{
    Confusion matrix of fine-tuning, BMKP, and \textit{Co$^2$In} on CIFAR-100 and Tiny-ImageNet.
  }
  \label{fig:confusion}
\end{figure*}

\subsection{Analysis of Computational Efficiency }

The knowledge integration is introduced to enhances the computational efficiency of \textit{Co$^2$In}. To validate this, we compare the FLOPs of various methods, including LWF, DER++, and \textit{Co$^2$In} with and without knowledge integration. As shown in Table~\ref{tab:flops}, knowledge integration notably reduces the computational cost in both CIFAR-100 and Tiny-ImageNet. Specifically, \textit{Co$^2$In} with knowledge integration achieves the lowest FLOPs, outperforming LWF and DER++ in both Class-IL and Task-IL settings. For example, in the CIFAR-100 Class-IL setting, \textit{Co$^2$In} reduces FLOPs by nearly 40\% compared to other methods. These results demonstrate that \textit{Co$^2$In} achieves state-of-the-art performance with significantly fewer computations. The reduction in computational cost is even more pronounced in the Task-IL setting, highlighting the computational efficiency of \textit{Co$^2$In}.

\begin{table}[t]
\centering
\caption{FLOPs for different methods on CIFAR-100 and Tiny-ImageNet.}
\label{tab:flops}
\resizebox{1.0\columnwidth}{!}{
  \renewcommand{\arraystretch}{0.6} 
\setlength{\tabcolsep}{1pt} 
\begin{tabular}{l|cc|cc}
\midrule                 
 \textbf{Method} & \multicolumn{2}{c}{\textbf{CIFAR-100}}  & \multicolumn{2}{c}{\textbf{Tiny-ImageNet}}  \\ 
 
                   & Class-IL & Task-IL & Class-IL & Task-IL   \\ 
\midrule    
  LWF \cite{lwf}                                & 5.57$\times 10^8$           &  5.57$\times 10^8$  &  2.22$\times 10^9$          &  2.22$\times 10^9$ \\ 
  DER++ \cite{der}                              & 5.57$\times 10^8$           &  5.57$\times 10^8$  &  2.22$\times 10^9$          &  2.22$\times 10^9$ \\ 
  \textit{Co$^2$In} w/o knowledge integration   & 1.45$\times 10^9$           &  1.45$\times 10^8$  &  3.23$\times 10^9$          &  3.23$\times 10^8$ \\ 
  \textit{Co$^2$In}                             & \textbf{3.49$\times 10^8$}  &  \textbf{3.49$\times 10^7$}  &  \textbf{1.43$\times 10^9$} & \textbf{1.43$\times 10^8$}               \\ 
\midrule  

\end{tabular}
}
\end{table}

\begin{figure*}[h]
  \centering
  \includegraphics[width=0.90\textwidth]{img/ps.pdf}
  \caption{
    Diagram of ACC (\%) of each task during incremental learning on CIFAR-100 and Tiny-ImageNet.
  }
  \label{fig:ps}
\end{figure*}


\subsection{Analysis of Classification Balance }

We propose the quadratic classifier to boost the Class-IL capability of \textit{Co$^2$In}. To validate its effectiveness, we analyze the confusion matrices under Class-IL for fine-tuning, BMKP, and \textit{Co$^2$In} on CIFAR-100 and Tiny-ImageNet datasets. The confusion matrix reveals the model's correct predictions (represented by the diagonal entries) and misclassifications (shown by off-diagonal entries). As illustrated in Fig.~\ref{fig:confusion} (a) and (d), fine-tuning inclines to classify samples into new-task classes. Additionally, Fig.~\ref{fig:confusion} (b) and (e) show that BMKP displays a bias towards the first task, showing that linear classifiers struggle to accurately identify new-task samples, akin to the toy-dataset example provided in Fig.~\ref{fig:toy_dataset}. In contrast, Fig.~\ref{fig:confusion} (c) and (f) demonstrate that the highlights in the confusion matrices of \textit{Co$^2$In} concentrate on diagonal entries, proving that the quadratic classifier enables \textit{Co$^2$In} with task-wise balance classification ability.

\subsection{Analysis of Plasticity and Stability}

This paper introduces a bi-level memory framework for \textit{Co$^2$In} that decouples the roles of knowledge acquiring and remembering into distinct memory units, promoting both plasticity and stability. In this section, we analyze the plasticity and stability by inspecting the performance trend of each task during incremental learning. The performance curves of DER++\cite{der}, GPM\cite{gpm}, and \textit{Co$^2$In} are depicted in Fig.~\ref{fig:ps}.
As evidenced in the figure, DER++ exhibits strong plasticity as it achieves high accuracy immediately once each task is finished. However, as subsequent tasks are introduced, performance sharply declines, since the stored partial old-task exemplars may not support the original data distribution and fail to prevent forgetting, highlighting the low stability of DER++. On the contrary, GPM maintains more stable performance across all tasks owing to its gradient projection mechanism. However, the peak performance for each task is lower compared to \textit{Co$^2$In} and DER++, revealing that the gradient direction restriction leads to low plasticity of GPM. As for \textit{Co$^2$In}, the bi-level memory framework guarantees high plasticity by the working memory and non-degrading performance for each task by the long-term memory.

\begin{table}[t]
  \caption{
    Class Incremental learning results (ACC \%) on CIFAR100 for various numbers of tasks (5, 10, 20). Results with ``$\dag$'' stem from ABD \cite{abd}.
  }
  \label{tab:tasks_classil}
  \centering
  \renewcommand{\arraystretch}{0.6} 
  \begin{tabular}{l|m{1.5cm}<{\centering}|m{1.5cm}<{\centering}|m{1.5cm}<{\centering}}
    \toprule
    \textbf{Methods}     & \textbf{5-split CIFAR-100}   & \textbf{10-split CIFAR-100}    & \textbf{20-split CIFAR-100}        \\ 
    \midrule
    DGR \cite{dgr}$^\dag$      & 14.4          & 8.1                   & 4.1                 \\
    LwF \cite{lwf}$^\dag$      & 17.0          & 9.2                   & 4.7                 \\ 
    DI \cite{di}$^\dag$        & 18.8          & 10.9                  & 5.7                 \\ 
    ABD \cite{abd}$^\dag$      & 43.9          & 33.7                  & 20.0    \\ 
    PASS \cite{pass}           & \underline{45.2}& 30.8                & 17.4                \\ 
    CRHD \cite{crhd}           & \textbf{46.1} & \underline{34.3}      & \underline{21.8}              \\
    DCPOC \cite{dcpoc}         & 33.1          & 27.5                  & 20.5              \\
    \midrule
    \textit{Co$^2$In} (ours)            & 42.5          & \textbf{37.5}         & \textbf{22.5}      \\ 
    \bottomrule
  \end{tabular}
\end{table}

\subsection{Study on Number of Tasks}

We first assess the Class-IL performance of \textit{Co$^2$In} across different task sequence lengths. Following Always Be Dreaming (ABD) \cite{abd}, we conduct experiments on CIFAR-100 using sequences of 5, 10, and 20 tasks. 
As depicted in Table~\ref{tab:tasks_classil}, both PASS and CRHD demonstrate better performance than \textit{Co$^2$In} for the 5-task setting. Nevertheless, in longer task sequence (10 and 20) settings, \textit{Co$^2$In} surpasses all the baselines. These findings prove that \textit{Co$^2$In} is effective for longer sequences of Class-IL tasks.

\begin{table}
  \caption{
    Task Incremental learning results (ACC \%) on 20-split CIFAR-100 and 25-split Tiny-ImageNet. Results with ($\dag$) stem from VDFD \cite{vdfd}.
  }
  \label{tab:tasks_taskil}
  \centering
\resizebox{0.9\columnwidth}{!}{
  \renewcommand{\arraystretch}{0.6} 
  \setlength{\tabcolsep}{5mm}{
  \begin{tabular}{l|m{2cm}<{\centering}|m{2cm}<{\centering}}
    \toprule
    \textbf{Methods}                & \textbf{20-split CIFAR-100}    & \textbf{25-split Tiny-ImageNet}     \\
    \midrule
    InstAParam$^{\dag}$\cite{InstAParam}& 51.04       & 34.64       \\
    PackNet$^{\dag}$\cite{packnet}  & 67.50           & 52.93       \\
    BLIP$^{\dag}$\cite{blip}        & 68.17           & 47.63       \\
    GEM$^{\dag}$\cite{gem}          & 68.89           & -           \\
    A-GEM$^{\dag}$\cite{a-gem}      & 61.91           & 53.32       \\
    EWC$^{\dag}$\cite{ewc}          & 71.66           & 52.33       \\
    MAS$^{\dag}$\cite{mas}          & 63.84           & 47.96       \\
    SI$^{\dag}$\cite{si}            & 59.76           & 45.27       \\
    IMM$^{\dag}$\cite{IMM}          & 57.94           & 46.81       \\
    ADNS$^{\dag}$\cite{adns}        & 77.33           & 59.77       \\
    LWF$^{\dag}$\cite{lwf}          & 74.38           & 56.57       \\
    iCaRL$^{\dag}$\cite{icarl}      & 75.75           & 56.35       \\
    PASS$^{\dag}$\cite{pass}        & 74.43           & 56.78       \\
    OWM$^{\dag}$\cite{OWM}          & 68.47           & 49.98       \\
    Adam-NSCL$^{\dag}$\cite{adam-nscl}& 75.95         & 58.28       \\
    VDFD$^{\dag}$\cite{vdfd}        & \underline{85.84} & 65.99       \\
    GPM\cite{gpm}                   & 77.55           & 68.62       \\
    \midrule        
    BMKP (ours)                     & 85.81           & \textbf{80.01}       \\
    \textit{Co$^2$In} (ours)                 & \textbf{87.76}  & \underline{78.78}       \\
    \bottomrule
  \end{tabular}
  }}
\end{table}

Next, we delve into the Task-IL performance of \textit{Co$^2$In} with a longer sequence of tasks. Following previous works \cite{adam-nscl, vdfd}, we conduct experiments on 20-split CIFAR-100 and 25-split Tiny-ImageNet, where 20-split CIFAR-100 is constructed by splitting CIFAR-100 into 20 tasks, and 25-split Tiny-ImageNet is constructed by splitting Tiny-ImageNet into 25 tasks. 
Table~\ref{tab:tasks_taskil} presents the comparison results. As can be seen, our method achieves the highest accuracy of 87.76\% on 20-split CIFAR-100, outperforming the second-best performance by a margin of 1.92\%. As for 25-split Tiny-ImageNet, both BMKP and \textit{Co$^2$In} overwhelm other methods, with BMKP achieving the highest accuracy of 80.01\% and \textit{Co$^2$In} closely following with an accuracy of 78.78\%. These results highlight the Task-IL ability of \textit{Co$^2$In} for longer sequences of tasks.

\begin{table}[t]
\centering
\caption{Accuracy (\%) of methods with linear and quadratic classifiers.}
\label{tab:quadratic}
\resizebox{1.0\columnwidth}{!}{
\setlength{\tabcolsep}{2pt} 
\renewcommand{\arraystretch}{0.6} 
\begin{tabular}{l|c|cc|cc}
\midrule                 
 \textbf{Method}     & \textbf{Classifier}   & \multicolumn{2}{c|}{\textbf{CIFAR-100}}  & \multicolumn{2}{c}{\textbf{Tiny-ImageNet}}  \\ 
 & & Class-IL  & Task-IL & Class-IL  & Task-IL \\
\midrule    
  LWF \cite{lwf}     &  Linear          & 10.05$\pm$0.6     & 63.78$\pm$4.3     & 9.36$\pm$0.1  & 58.61$\pm$1.8    \\ 
  LWF \cite{lwf}     &  Quadratic       & FAIL              & FAIL              & FAIL          & FAIL    \\ 
\midrule    
  DER++ \cite{der}   &  Linear          & 7.42$\pm$0.5      & 62.21$\pm$2.6     & 6.77$\pm$0.1  & 60.91$\pm$1.3    \\ 
  DER++ \cite{der}   &  Quadratic       & FAIL              & FAIL              & FAIL          & FAIL    \\  
\midrule    
  GPM \cite{gpm}     &  Linear          & 8.19$\pm$0.1     & \textbf{70.93$\pm$0.9}    & 7.04$\pm$0.1     & \textbf{59.84$\pm$0.2}    \\ 
  GPM \cite{gpm}     &  Quadratic       & \textbf{17.04$\pm$1.0}    & 61.11$\pm$0.8    & \textbf{15.92$\pm$0.3}    & 53.15$\pm$0.4    \\ 
\midrule    
  FAS \cite{fas}     &  Linear          & 25.79$\pm$0.7    & 70.89$\pm$0.6    & 24.29$\pm$0.3    & \textbf{60.10$\pm$0.2}  \\ 
  FAS \cite{fas}     &  Quadratic       & \textbf{27.52$\pm$1.4}    & \textbf{72.93$\pm$0.9}    & \textbf{24.63$\pm$0.3}    & 59.18$\pm$0.3\\ 
\midrule    
  \textit{Co$^2$In} (ours) &  Linear    & 16.90$\pm$0.5    & 79.65$\pm$0.4    & 8.81$\pm$0.1     & \textbf{69.31$\pm$0.5}             \\ 
  \textit{Co$^2$In} (ours) &  Quadratic & \textbf{37.54$\pm$0.9}    & \textbf{84.21$\pm$1.0}    & \textbf{31.23$\pm$0.5}    & 68.75$\pm$0.7               \\ 
\midrule  

\end{tabular}
}
\end{table}

\begin{table}[t]
\centering
\caption{
Class-IL performance (\%) of Co$^2$In using linear vs. quadratic classifiers on CIFAR-100. ``Number of Expanded Basis'' indicates the maximum number of new basis allowed to be added in each task.
}
\label{tab:quadratic_basis}
\resizebox{1.0\columnwidth}{!}{
\setlength{\tabcolsep}{4pt} 
  \renewcommand{\arraystretch}{0.6} 
\begin{tabular}{c|cc}
\midrule                 
 \textbf{Number of Expanded Basis}   &  \textbf{Linear Classifier} & \textbf{Quadratic Classifier}    \\ 
\midrule    
 0          & 8.98          &  8.50              \\ 
 1          & 12.96         &  24.85              \\ 
 2          & 13.84         &  36.14                         \\ 
 4          & 14.17         &  36.70                    \\ 
 8          & 14.86         &  37.10                    \\ 
 No Restriction & 16.90     &  37.54                \\
\midrule  
\end{tabular}
}
\end{table}

\begin{table*}[t]
\centering
\caption{Comparison of FLOPs, memory usage, and performance (Class-IL and Task-IL accuracy) on CIFAR-100 and Tiny-ImageNet.}
\label{tab:comare_memory_and_flops}
\resizebox{1.0\textwidth}{!}{
  \renewcommand{\arraystretch}{0.6} 
\setlength{\tabcolsep}{1pt} 
\begin{tabular}{l|c|ccccc|ccccc}
\midrule                 
\textbf{Method} & \textbf{Backbone}   & \multicolumn{5}{c|}{\textbf{CIFAR-100}}  & \multicolumn{5}{c}{\textbf{Tiny-ImageNet}}  \\ 
                  &   & \multicolumn{2}{c}{\textbf{Class-IL}} & \multicolumn{2}{c}{\textbf{Task-IL}} & Memory (MB)  & \multicolumn{2}{c}{\textbf{Class-IL}} & \multicolumn{2}{c}{\textbf{Task-IL}} & Memory (MB)  \\ 
& & Accuracy (\%) & Flops & Accuracy (\%) & Flops & & Accuracy (\%) & Flops & Accuracy (\%) & Flops & \\
\midrule   
  LWF \cite{lwf}    & standard ResNet18  & 10.05 & 5.57$\times 10^8$ & 63.78 & 5.57$\times 10^8$ & 42.50       & 9.36 & 2.22$\times 10^9$ & 58.61 & 2.22$\times 10^9$ & 42.50                  \\ 
  DER++ \cite{der}  & standard ResNet18  & 36.52 & 5.57$\times 10^8$ & 74.86 & 5.57$\times 10^8$ & 48.35       & 19.38 & 2.22$\times 10^9$ & 51.91 & 2.22$\times 10^9$ & 65.94                  \\ 
\midrule    
  \textit{Co$^2$In} & standard ResNet18  & \textbf{39.25} & 3.24$\times 10^9$ & \textbf{84.78} & 3.24$\times 10^9$ & 309.32 & \textbf{33.25} &  1.19$\times 10^{10}$ & \textbf{70.08} & 1.19$\times 10^{9}$ & 270.81                \\   
  \textit{Co$^2$In} & narrower ResNet18  & 37.54 & \textbf{3.49$\times 10^8$} & 84.21 & \textbf{3.49$\times 10^7$}  & \textbf{25.53}       & 31.23 & \textbf{1.43$\times 10^9$} & 68.75 & \textbf{1.43$\times 10^8$} & \textbf{23.74}                    \\ 
\midrule  

\end{tabular}
}
\end{table*}

\begin{table}[t]
\centering
\caption{Comparison with different numbers of non-decomposed layers.}
\label{tab:layer}
\resizebox{1.0\columnwidth}{!}{
  \renewcommand{\arraystretch}{0.6} 
\setlength{\tabcolsep}{2pt} 
\begin{tabular}{c|ccccc}
\midrule                 
  & \multicolumn{5}{c}{\textbf{CIFAR-100}}    \\ 
\midrule    
Non-decomp. layers  & \multicolumn{2}{c}{Class-IL} & \multicolumn{2}{c}{Task-IL} & Memory (MB)\\ 
                   & Accuracy (\%) & Flops & Accuracy (\%) & Flops & \\
\midrule    
  All decomposed   & 37.54  & 3.49$\times 10^8$   & 84.21  &  3.49$\times 10^7$  & 25.53 \\ 
  First 5   & 35.87  & 2.62$\times 10^8$   & 83.37  &  4.34$\times 10^7$  & 25.46 \\ 
  First 10  & 35.60  & 2.13$\times 10^8$   & 81.09  &  5.03$\times 10^7$  & 23.95 \\ 
  First 15  & 31.46  & 1.45$\times 10^8$   & 75.33  &  5.53$\times 10^7$  & 21.19 \\ 
\midrule    
                & \multicolumn{5}{c}{\textbf{Tiny-ImageNet}}    \\ 
\midrule    
Non-decomp. layers  & \multicolumn{2}{c}{Class-IL} & \multicolumn{2}{c}{Task-IL} & Memory (MB)\\ 
                   & Accuracy (\%) & Flops & Accuracy (\%) & Flops & \\
\midrule    
  All decomposed   & 31.23  & 1.43$\times 10^9$   & 68.75  & 1.43$\times 10^8$  & 23.74 \\ 
  First 5   & 30.19  & 6.06$\times 10^8$   & 67.11  & 1.66$\times 10^8$  & 23.14 \\ 
  First 10  & 28.94  & 5.93$\times 10^8$   & 65.82  & 1.95$\times 10^8$  & 22.74 \\ 
  First 15  & 26.56  & 4.72$\times 10^8$   & 63.23  & 2.24$\times 10^8$  & 22.45 \\ 
\midrule  
\end{tabular}
}
\end{table}

\subsection{Study the Adaptability of Quadratic Classifier} \label{sec:exp_quadratic_classifier}

This section investigates the application of the quadratic classifier to other IL methods and explores its adaptability. We select LWF, DER++, GPM, and FAS for this analysis. As shown in Table~\ref{tab:quadratic}, when applied to low-stability methods such as LWF and DER++, the training becomes highly unstable. This instability may arise from the quadratic classifier's tendency to learn compact decision boundaries, which makes it sensitive to feature variations. Even minor fluctuations in the features can lead to significant changes in the classification logits. This characteristic, however, makes it only suitable for high-stability methods. For example, the quadratic classifier yields significant improvement for \textit{Co$^2$In}, with Class-IL performance increasing by 20.64\% on CIFAR-100 and 22.42\% on Tiny-ImageNet, respectively.

However, the performance gain for FAS is relatively modest. We attribute this discrepancy primarily to the constraint that its shared basis remains fixed after the first task, whereas Co$^2$In dynamically expands its basis throughout incremental learning. To validate this, we conducted experiments restricting basis expansion in Co$^2$In. As shown in Table~\ref{tab:quadratic_basis}, without basis expansion, the quadratic classifier yields no improvement and may even degrade performance. As the number of expanded basis increases, the benefits of the quadratic classifier become more significant. This behavior aligns with GPM, where the projection mechanism encourages learning new, orthogonal weights, akin to basis expansion. We believe this dependence on basis expansion is intuitive: new basis captures task-specific features that help distinguish between old and new tasks.

\subsection{Results of \textit{Co$^2$In} on Standard ResNet18}

In the experiments, we selected a narrower version of ResNet18 as the backbone for \textit{Co$^2$In}. This section analyzes its performance using the standard ResNet18 architecture. Table~\ref{tab:comare_memory_and_flops} presents the results of \textit{Co$^2$In} on CIFAR-100 and Tiny-ImageNet. On the standard ResNet18, \textit{Co$^2$In} achieves the highest performance but incurs greater memory and computational costs. However, it also results in higher memory and computational overhead.  We recommend using a narrower ResNet18 for \textit{Co$^2$In}, as it strikes a favorable trade-off between accuracy and computational/memory overhead.

\begin{table}[t]
\centering
\caption{Comparison of pretrained model-based IL methods.}
\label{tab:pretrain-based}
\resizebox{1.0\columnwidth}{!}{
  \renewcommand{\arraystretch}{0.6} 
\setlength{\tabcolsep}{3pt} 
\begin{tabular}{l|ccc|ccc}
\midrule
\textbf{Method} & \multicolumn{3}{c}{\textbf{CIFAR-100}}  & \multicolumn{3}{c}{\textbf{Tiny-ImageNet}}  \\
                & Class-IL & Task-IL & Memory & Class-IL & Task-IL & Memory   \\
\midrule
  ACMap~\cite{acmap}       & 89.91\% & 98.35\% & 63.28 MB & 86.01\% & 96.12\% & 63.28 MB \\
  InfoLoRA~\cite{infolora} & 85.83\% & 98.62\% & 63.28 MB & 84.62\% & 93.58\% & 63.28 MB \\
  Naive LoRA               & 83.44\% & 98.03\% & 63.28 MB & 79.30\% & 91.42\% & 63.28 MB \\
  \textit{Co$^2$In} (ours) & 85.96\% & 98.74\% & 63.98 MB & 84.38\% & 96.72\% & 60.38 MB \\
\midrule
\end{tabular}
}
\end{table}

\subsection{Study on Number of Non-Decomposed Layers}

This section investigates the overhead introduced by the bi-level memory framework. To this end, we freeze the first 5, 10, and 15 layers closest to the input after learning the first task and apply the knowledge consolidation and integration only for the remaining layers. 

The results in Table~\ref{tab:layer} reveal a clear trade-off between resource efficiency and model performance. Freezing more layers significantly reduces memory usage but also decreases performance. For instance, on CIFAR-100, freezing 15 layers reduced memory usage by approximately 17\% and FLOPs by nearly 60\%. However, a simultaneous performance decline is observed: Class-IL accuracy decreases by 6.08\%, while Task-IL accuracy drops by 8.88\%. Similar trends are observed on Tiny-ImageNet, where freezing 15 layers led to a 4.67\% reduction in Class-IL accuracy.

We also observe an interesting yet reasonable phenomenon: as the number of non-decomposed layers decreases, FLOPs for Class-IL increase, while those for Task-IL decrease. This is because Class-IL requires all extended parameters to be processed in parallel, which results in higher FLOPs with more decomposed layers. In contrast, Task-IL only activates certain parts of the parameters, so a greater number of decomposed layers benefits from the accelerated knowledge integration, i.e., the reduction in FLOPs.

\section{Extension to Pretrained Model-based Incremental Learning}
\label{sec:pretrained-cl}

Although this paper focuses on training from scratch, the proposed \textit{Co$^2$In} can be seamlessly extended to pretrained model-based incremental learning. In this setting, the pretrained model provides a strong and stable feature prior. Therefore, we freeze the pretrained backbone and learn a residual branch for each adaptable layer. For the $l$-th layer at task $t$, the effective weight is formulated as
\begin{equation*}
    \bm{W}_t^l = \bm{W}_{pre}^l + \Delta \bm{W}_t^l,
\end{equation*}
where $\bm{W}_{pre}^l$ denotes the frozen pretrained weight, and $\Delta \bm{W}_t^l$ denotes the task-specific residual to be learned.

The residual branch follows the same knowledge encoding, consolidation, and integration pipeline as \textit{Co$^2$In}. Specifically, the working memory initializes $\Delta \bm{W}_t^l$ as a zero matrix with the same size as $\bm{W}_{pre}^l$ and optimizes it on the current task. After training, the learned residual is consolidated into the long-term memory by decomposing it into a pattern basis $\bm{B}_t^l$ and a compact knowledge representation $\bm{A}_t^l$. The long-term memory then integrates the basis and representations from all learned tasks into separate layers $\bm{B}^l$ and $\bm{A}^l$, enabling efficient parallel inference.


We follow ACMap \cite{acmap} and evaluate \textit{Co$^2$In} using ViT-B/16 pretrained on ImageNet as the backbone. Three representative baselines are selected: Naive LoRA, which trains an independent LoRA module for each task and merges them during inference; InfoLoRA~\cite{infolora}, a LoRA-based IL method; and ACMap~\cite{acmap}, an adapter-based IL method.
As shown in Table~\ref{tab:pretrain-based}, with comparable memory consumption, \textit{Co$^2$In} achieves the best Task-IL performance on both CIFAR-100 and Tiny-ImageNet, indicating that the proposed residual consolidation strategy effectively preserves task-specific knowledge. For Class-IL, \textit{Co$^2$In} obtains performance comparable to existing pretrained IL methods, while clearly outperforming Naive LoRA. These results show that \textit{Co$^2$In} is effective for pretrained-model-based IL.

\section{Proof of the Orthogonality in Eq.~\eqref{equation:new_pattern}} \label{sec:math}

To support the orthogonality claim in Eq.\eqref{equation:new_pattern}, we present Theorem~\ref{the:orth}, which is derived from the properties of SVD.

\begin{theorem} \label{the:orth}
    Let $\bm{B}$ be an $m \times r$ matrix with orthonormal columns, i.e., $\bm{B}^\top \bm{B} = \bm{I}_r$. Given any $m \times n$ matrix $\bm{Z}$, perform the singular value decomposition:
    \[
    \bm{U}\bm{\Sigma}\bm{V}^\top = \bm{Z} - \bm{B}\bm{B}^\top \bm{Z},
    \]
    where $\bm{U}$ is an $m \times k$ orthogonal matrix, $\bm{\Sigma}$ is a diagonal matrix containing the non-zero singular values, and $\bm{V}$ is a $k \times n$ orthogonal matrix. Then, the columns of $\bm{U}$ lie in the left null space of $\bm{B}$, i.e.,
    \[
    \bm{B}^\top \bm{U} = \bm{0}.
    \]
\end{theorem}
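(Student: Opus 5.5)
The plan is to show first that the residual matrix $\bm{R} := \bm{Z} - \bm{B}\bm{B}^\top\bm{Z}$ is annihilated by $\bm{B}^\top$. Then we transfer this property to $\bm{U}$ using the fact that $\bm{\Sigma}$ contains only non-zero singular values, so that $\bm{\Sigma}$ is invertible and $\bm{V}$ has orthonormal rows ($\bm{V}^\top$ has orthonormal columns, i.e., $\bm{V}\bm{V}^\top = \bm{I}_k$).

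\textbf{Step 1.} Compute $\bm{B}^\top \bm{R} = \bm{B}^\top\bm{Z} - (\bm{B}^\top\bm{B})\bm{B}^\top\bm{Z}$. Since $\bm{B}^\top\bm{B} = \bm{I}_r$, this gives $\bm{B}^\top\bm{Z} - \bm{B}^\top\bm{Z} = \bm{0}$. Equivalently, $\bm{I}_m - \bm{B}\bm{B}^\top$ is the orthogonal projector onto the orthogonal complement of the column space of $\bm{B}$, and every column of $\bm{R}$ lies in that complement.

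\textbf{Step 2.} Express $\bm{U}$ in terms of $\bm{R}$. From the compact SVD $\bm{R} = \bm{U}\bm{\Sigma}\bm{V}^\top$, right-multiply by $\bm{V}$ and use $\bm{V}^\top\bm{V} = \bm{I}_k$ (here $\bm{V}$ is $n\times k$ with orthonormal columns under the usual convention; the statement's dimension label is a transposition convention, and the argument is the same either way). This gives $\bm{R}\bm{V} = \bm{U}\bm{\Sigma}$. Because the diagonal entries of $\bm{\Sigma}$ are non-zero, $\bm{\Sigma}$ is invertible, so $\bm{U} = \bm{R}\bm{V}\bm{\Sigma}^{-1}$. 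In words, each column of $\bm{U}$ is $\sigma_i^{-1}\bm{R}\bm{v}_i$, a linear combination of the columns of $\bm{R}$.

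\textbf{Step 3.} Combine the two steps: $\bm{B}^\top\bm{U} = (\bm{B}^\top\bm{R})\bm{V}\bm{\Sigma}^{-1} = \bm{0}$. This shows that the new basis vectors $\bm{U}^l_t$ in Eq.~\eqref{equation:new_pattern} are orthogonal to $\bm{B}^l_{t-1}$. Hence the concatenation $[\bm{B}^l_{t-1}, \bm{U}^l_t]$ again has orthonormal columns, and the same argument applies inductively at each task.

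The only delicate point is Step 2. We must use that $\bm{\Sigma}$ keeps only the \emph{non-zero} singular values. A full SVD would include left singular vectors for zero singular values, and those are arbitrary vectors in the left null space of $\bm{R}$, which need not be orthogonal to $\bm{B}$. The proof therefore relies explicitly on the compact (thin) SVD as stated in the theorem. This also covers the truncation by retained singular-value ratio, since any subset of the columns of $\bm{U}$ remains orthogonal to $\bm{B}$.
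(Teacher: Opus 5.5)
Your proof is correct and matches the paper's argument: the paper writes $\bm{B}^\top\bm{U} = \bm{B}^\top\bm{U}\bm{\Sigma}\bm{V}^\top\bm{V}\bm{\Sigma}^{-1} = \bm{B}^\top(\bm{Z}-\bm{B}\bm{B}^\top\bm{Z})\bm{V}\bm{\Sigma}^{-1} = \bm{0}$, which is your $\bm{U} = \bm{R}\bm{V}\bm{\Sigma}^{-1}$ combined with $\bm{B}^\top\bm{R} = \bm{0}$. Your remarks on the compact SVD (invertible $\bm{\Sigma}$) and on the dimension convention for $\bm{V}$ are accurate and spell out what the paper's proof uses implicitly.
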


\begin{proof}

\begin{equation*}
  \begin{aligned}
      \bm{B}^\top \bm{U} &= \bm{B}^\top \bm{U} \bm{\Sigma} \bm{\Sigma}^{-1} 
      = \bm{B}^\top \bm{U} \bm{\Sigma}\bm{V}^\top \bm{V} \bm{\Sigma}^{-1} \\
      &= \bm{B}^\top \left(\bm{Z} \!-\! {\bm{B}}{\bm{B}}^\top \ \bm{Z}\right) \bm{V} \bm{\Sigma}^{-1}
      \!=\! \left(\bm{B}^\top\bm{Z} \!-\! {\bm{B}}^\top \bm{Z}\right) \bm{V} \bm{\Sigma}^{-1} \\
      &= \bm{0} \bm{V} \bm{\Sigma}^{-1} = \bm{0} .
  \end{aligned}
\end{equation*}

\end{proof}

\section{Conclusion} \label{sec:conclusion}

This work draws inspiration from the multi-level human memory system and proposes a bi-level memory framework for incremental learning. The bi-level memory framework consists of a working memory for acquiring new knowledge and a long-term memory for preserving previously learned information. With the separate memory units for knowledge acquiring and remembering, the bi-level memory framework ensures both high plasticity and stability. Furthermore, in light of the human memory mechanism, we devise a series of strategies for knowledge encoding, consolidation, and integration within the bi-level memory framework, which together constitute our \textit{Co$^2$In}. Extensive experiments demonstrate that \textit{Co$^2$In} effectively handles Task-IL and Class-IL with efficient memory and computation utilization.

Nonetheless, \textit{Co$^2$In} has some limitations, such as missing backward knowledge transfer and escalating memory consumption proportional to the number of tasks learned. In the future, we plan to improve \textit{Co$^2$In} by integrating advanced pattern basis editing methods, such as meta-learning. These enhancements are expected to facilitate backward knowledge transfer and effectively curb the expansion of the pattern basis, further advancing the efficiency of \textit{Co$^2$In}.

\section*{Acknowledgments}
This work was supported in part by the National Natural Science Foundation of China under Grant 62276019, 62306028, 62501043, U22B2004, in part by the Fundamental Research Funds for the Central Universities under Grant 2022JBQY007, 2026XKRC005, in part by the Beijing Natural Science Foundation under Grant L231019, in part by the RIE2025 Industry Alignment Fund – Industry Collaboration Projects (IAF-ICP) (Award I2301E0026), administered by A*STAR, supported by Alibaba Group and NTU Singapore through Alibaba-NTU Global e-Sustainability CorpLab (ANGEL), partially funded by National Research Foundation Fellowship (NRFF13-2021-0006), Singapore.

 





{\small
\bibliographystyle{IEEEtran}
\bibliography{egbib.bib}
}

\def \bioGapReduce{0pt}

\begin{IEEEbiography}[{\includegraphics[width=1.0in,height=1.2in,clip,keepaspectratio]{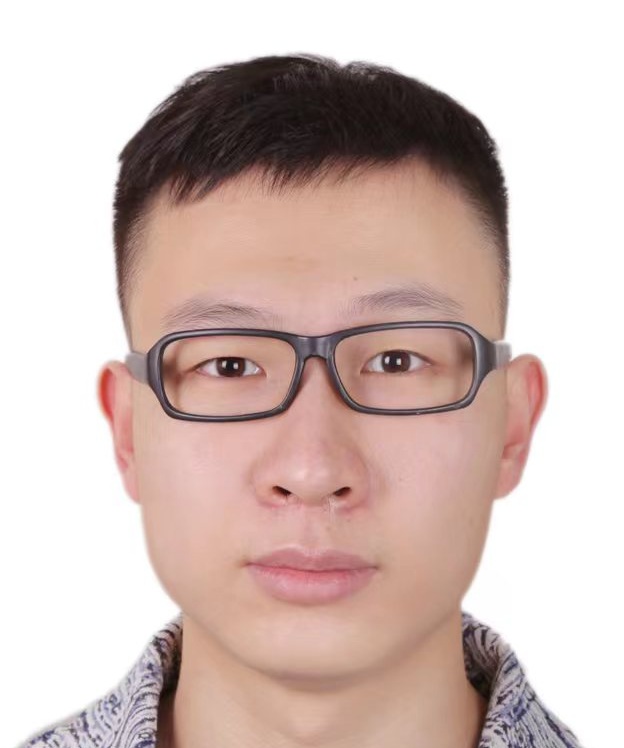}}]{Wenju Sun}
received the B.Sc. degree in Software Engineering from Beijing Jiaotong University, Beijing, China. He is currently pursuing his Ph.D. degree at the School of Computer Science and Technology, Beijing Jiaotong University, Beijing, China. His research interests include computer vision and incremental learning.
\end{IEEEbiography}

\vspace{-33pt}
\begin{IEEEbiography}[{\includegraphics[width=1.0in,height=1.2in,clip,keepaspectratio]{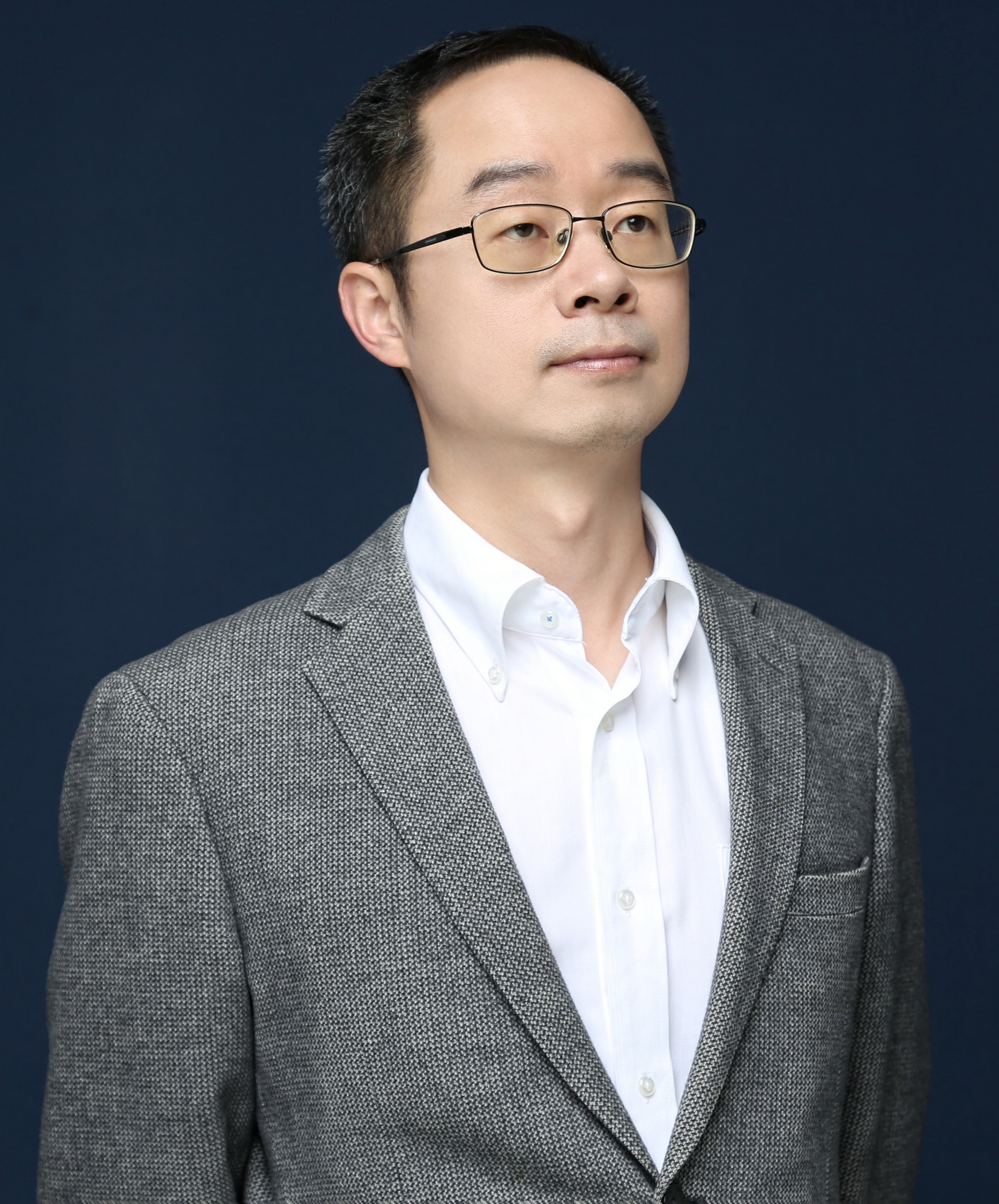}}]{Qingyong Li}
received the B.Sc. degree in Computer Science and Technology from Wuhan University, Wuhan, China, in 2001, and his Ph.D. degree in Computer Science and Technology from the Institute of Computing Technology (ICT), Chinese Academy of Sciences (CAS), Beijing, China, in 2006. He is currently a Professor at Beijing Jiaotong University, Beijing. His current research interests include computer vision and artificial intelligence.
\end{IEEEbiography}

\vspace{-33pt}
\begin{IEEEbiography}[{\includegraphics[width=1.0in,height=1.2in,clip,keepaspectratio]{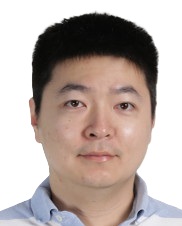}}]{Boyang Li}
received the B.Sc. degree in Computer Engineering from Nanyang Technological University, Singapore, in 2008, and his Ph.D. degree in Computer Science from the Georgia Institute of Technology, Georgia, US, in 2014. He is currently an Associate Professor at Nanyang Technological University, Singapore. His current research interests include computational narrative intelligence, multi-modal learning, and machine learning.
\end{IEEEbiography}

\vspace{-33pt}
\begin{IEEEbiography}[{\includegraphics[width=1.0in,height=1.2in,clip,keepaspectratio]{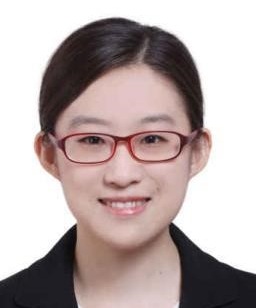}}]{Wen Wang}
received the Ph.D. degree in Computer Science and Technology from the Institute of Computing Technology (ICT), Chinese Academy of Sciences (CAS), Beijing, China, in June 2017. Currently, she is an Associate Professor at Beijing Jiaotong University, Beijing. Her research interests mainly include computer vision, pattern recognition, and machine learning.
\end{IEEEbiography}

\vspace{-33pt}
\begin{IEEEbiography}[{\includegraphics[width=1.0in,height=1.2in,clip,keepaspectratio]{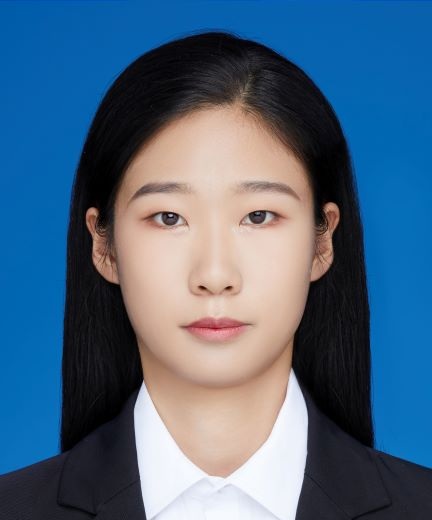}}]{Siyu Zhang}
received the B.Sc. degree in Engineering Mechanics from Beijing Jiaotong University, Beijing, China. She is currently pursuing her M.S. degree at the School of Computer Science and Technology, Beijing Jiaotong University, Beijing, China. Her research interests include incremental learning and novel class discovery.
\end{IEEEbiography}

\vspace{-33pt}
\begin{IEEEbiography}[{\includegraphics[width=1.0in,height=1.2in,clip,keepaspectratio]{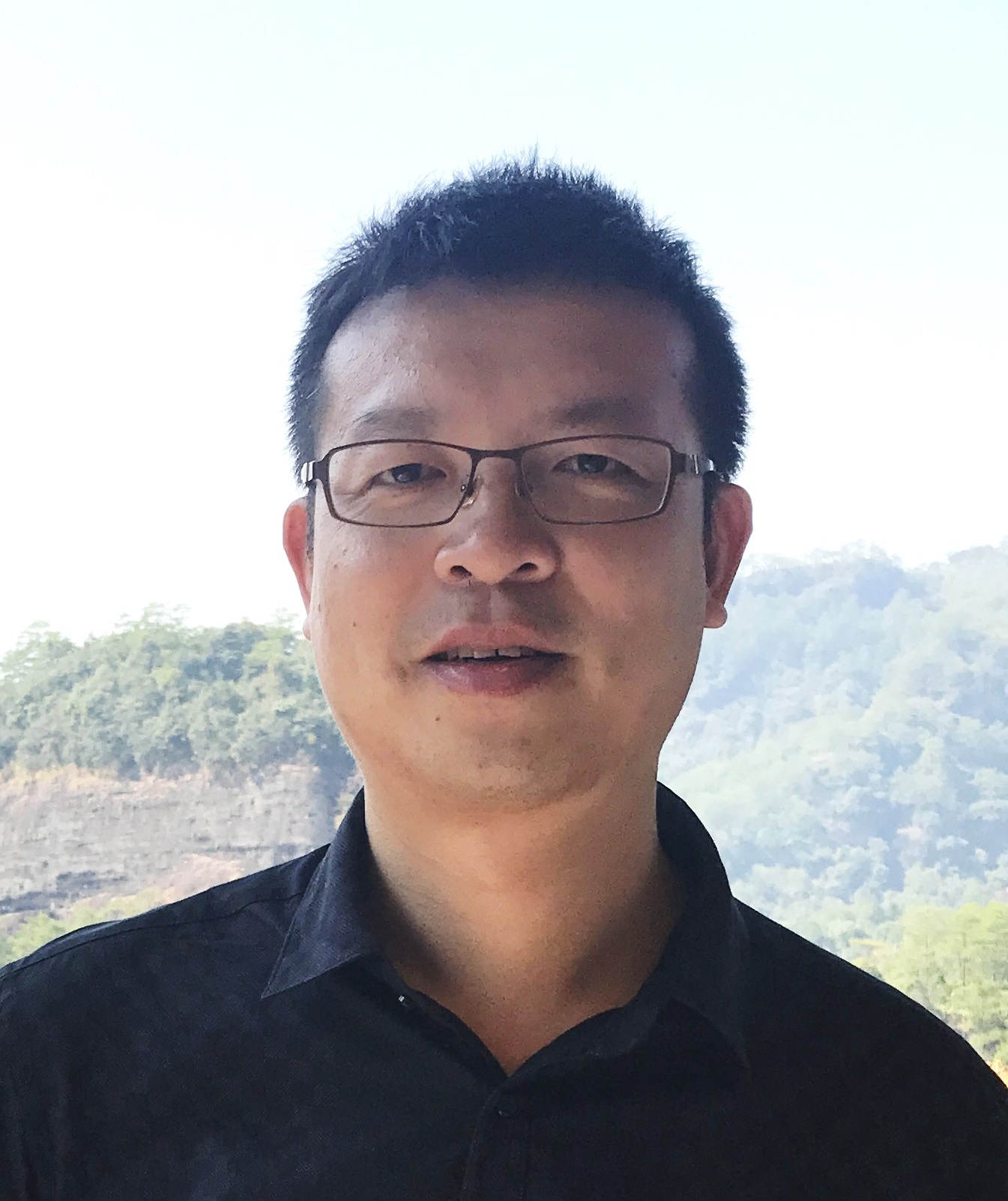}}]{Yi Wei}
received the Ph.D. degree in Computer Science and Technology from the Institute of Computing Technology (ICT), Chinese Academy of Sciences (CAS), Beijing, China, in 2008. He is currently an Associate Professor at the School of Mathematics and Computer Science, Wuyi University, Fujian, China. His research interests include causal discovery, causal inference, causal learning, and their applications.
\end{IEEEbiography}

\vspace{-33pt}
\begin{IEEEbiography}[{\includegraphics[width=1.0in,height=1.2in,clip]{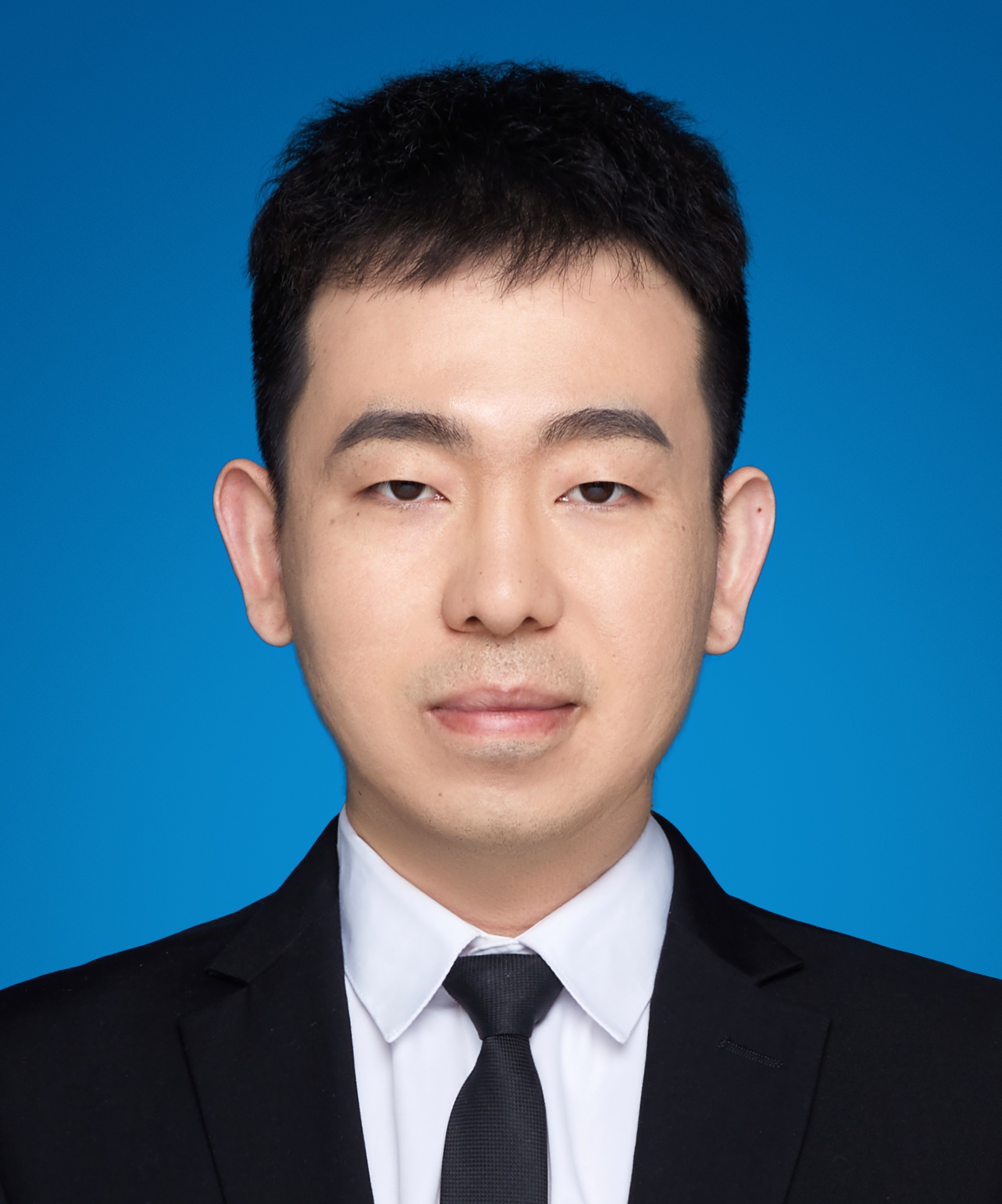}}]{Yangli-ao Geng} received his Ph.D. in Computer Science and Technology from Beijing Jiaotong University, Beijing, China, in 2021. He is currently an Assistant Professor at the School of Computer and Information Technology, Beijing Jiaotong University. His research interests include graph neural networks, spectral graph methods, and graph learning.
\end{IEEEbiography}


 




\vfill

\end{document}